%% file: paper.tex
\documentclass{article}

\usepackage{bm}
\usepackage[table,xcdraw]{xcolor}
\usepackage{comment}
\usepackage{silence}
\usepackage{microtype}
\usepackage{graphicx}
\usepackage{subcaption}
\usepackage{booktabs} 

\usepackage{hyperref}

\usepackage[preprint]{icml2026}

\usepackage{amsmath}
\usepackage{amssymb}
\usepackage{mathtools}
\usepackage{amsthm}
\usepackage{bbm}
\usepackage{pifont}
\usepackage[capitalize,noabbrev]{cleveref}

\theoremstyle{plain}
\newtheorem{theorem}{Theorem}[section]

\newtheorem{lemma}[theorem]{Lemma}

\theoremstyle{definition}
\newtheorem{definition}[theorem]{Definition}
\newtheorem{assumption}[theorem]{Assumption}
\theoremstyle{remark}
\newtheorem{remark}[theorem]{Remark}

\usepackage[textsize=tiny]{todonotes}

\icmltitlerunning{Adapter-Augmented Bandits for Online Multi-Constrained Multi-Modal Inference Scheduling}

\begin{document}

\twocolumn[
  \icmltitle{Adapter-Augmented Bandits for Online\texorpdfstring{\\}{ } Multi-Constrained Multi-Modal Inference Scheduling}



  \icmlsetsymbol{equal}{*}

  \begin{icmlauthorlist}
    \icmlauthor{Xianzhi Zhang}{yyy,comp}
    \icmlauthor{Yue Xu}{yyy}
    \icmlauthor{Yinlin Zhu}{yyy}
    \icmlauthor{Di Wu}{yyy}
    \icmlauthor{Yipeng Zhou}{comp}
    \icmlauthor{Miao Hu}{yyy}
    \icmlauthor{Guocong Quan}{yyy}
  \end{icmlauthorlist}
  \icmlaffiliation{yyy}{School of Computer Science and Engineering, Sun Yat-sen University, Guangzhou, 510006, China.}
  \icmlaffiliation{comp}{School of Computing, Macquarie University, NSW 2109, Australia}
  \icmlcorrespondingauthor{Di Wu}{wudi27@mail.sysu.edu.cn}
  \icmlkeywords{Machine Learning, ICML}
  \vskip 0.3in
]



\printAffiliationsAndNotice{}  
\begin{abstract}

Multi-modal large language model (MLLM) inference scheduling enables strong response quality under practical and heterogeneous budgets, beyond what a homogeneous single-backend setting can offer.
Yet online MLLM task scheduling is nontrivial, as requests vary sharply in modality composition and latent reasoning difficulty, while execution backends incur distinct, time-varying costs due to system jitter and network variation.
These coupled uncertainties pose two core challenges: deriving semantically faithful yet scheduling-relevant multi-modal task representations, and making low-overhead online decisions over irreversible multi-dimensional budgets. 
Accordingly, we propose \emph{M$^2$-CMAB} (\underline{M}ulti-modal \underline{M}ulti-constraint \underline{C}ontextual \underline{M}ulti-\underline{A}rmed \underline{B}andit), a multi-adapter-enhanced MLLM inference scheduling framework with three components:
(i) a CLS-attentive, frozen-backbone \emph{Predictor} that extracts compact task representations and updates only lightweight adapters for action-specific estimation; 
(ii) a primal-dual \emph{Constrainer} that maintains online Lagrange multipliers to enforce long-horizon constraints via per-round objectives;
and (iii) a two-phase \emph{Scheduler} that balances exploration and exploitation under irreversible budgets. 
We establish a regret guarantee under multi-dimensional knapsack constraints.
On a composite multimodal benchmark with heterogeneous backends, \emph{M$^2$-CMAB} consistently outperforms state-of-the-art baselines across budget regimes, achieving up to 14.18\% higher reward and closely tracking an oracle-aided upper bound.
Codes are available at \url{https://anonymous.4open.science/r/M2CMAB/}.
\end{abstract}

\input{main_content}
\clearpage
\section*{Impact Statement}
This paper presents work whose goal is to advance the field of 
Machine Learning. There are many potential societal consequences 
of our work, none which we feel must be specifically highlighted here.
\bibliography{ref}
\bibliographystyle{icml2026}

\newpage
\appendix
\onecolumn
\input{appendix}

\end{document}

%% file: main_content.tex
\section{Introduction}

Multi-modal large language models (MLLMs) have achieved remarkable success in jointly reasoning over text, images, audio, and variable-modal inputs~\cite{Xu_2025_ICCV,Liu2025}. Advanced MLLM families, such as the Qwen-VL~\cite{bai2025qwen3vl,bai2025qwen25vl} and LLaVA series~\cite{li2024llava_onevision,li2024llava_next_interleave} with multiple variants, can ingest heterogeneous sensory inputs and generate richly grounded responses, fueling increasingly diverse modality compositions and service-level requirements~\cite{Hao2025,li2025arXiv}. In practice, cloud-scale deployment benefits from abundant compute and larger models, often delivering stronger response quality and robustness, whereas on-device deployment favors lightweight models that are cheaper to run and better suited for low-latency and privacy-sensitive operation. 
The collaboration of them offers a pragmatic paradigm, enabling real-world inference streams to achieve strong quality under practical cost constraints.

Despite this promise, resource-constrained deployment makes MLLM inference scheduling a fundamental problem~\cite{li2025CollaborativeSurvey}. 
Incoming requests vary in modality composition and reasoning difficulty, giving rise to diverse service-level requirements that span quality of responses and budget consumption~\cite{Xia2024,liyang2025arXiv}.
Meanwhile, execution backends exhibit substantially different performance-cost profiles, characterized by distinct model capacity, pricing, throughput, or network-dependent latency~\cite{yang2024PerLLMArXiv}.
As illustrated in Fig.~\ref{fig:distribution}, these coupled heterogeneities are reflected in average reward (response quality), latency, and monetary cost across on-device MLLMs and cloud APIs under a mixed multi-modal task trace.
We thus ask: \textit{How can we dynamically schedule MLLM inference across heterogeneous backends to maximize expected inference reward under stringent budgets?} Realizing this goal entails several core challenges.

The first bottleneck lies in representing inference tasks in both semantically faithful and scheduling-relevant.
On the task side, real-world requests differ in modality composition, input scale, and latent reasoning difficulty, which can be heavy-tailed even within a single domain~\cite{Yuan2025,yang2025moaoff}.
On the execution side, the observed inference results are confounded by stochastic decoding and system jitter, while end-to-end budget consumptions further entangle model compute with exogenous noise (e.g., network variation and time-varying on-device GPU contention), yielding non-stationary and drifting runtime patterns.
Some handcrafted features~\cite{Fu2024NIPS,Yuan2025} or coarse statistics (e.g., token-count surrogates~\cite{Fu2024NIPS}, modality indicators~\cite{Yuan2025,yang2025moaoff}, and static device profiles~\cite{yang2024PerLLMArXiv}), are brittle proxies of difficulty and cost, and can break under distribution or modality shifts~\cite{chen2025,Oh2025}. This motivates leveraging MLLMs themselves to produce task-level representations that summarize multi-modal semantics and implicit difficulty cues~\cite{Tu2025, Wu2025,Wang2025}, enabling more reliable online scheduling under uncertainty.

A second obstacle arises from the online scheduling itself.
First, online collaboration scheduling typically runs at the local and must stay on the critical path, leaving little headroom for expensive decision making~\cite{Ma2025}. Methods that either depend on auxiliary MLLM-based embedding extraction~\cite{Oh2025,Tu2025,Wu2025}, or require model tuning/adaptation before each decision~\cite{Huang2025,Ge2025}, can incur non-trivial computation and inference overhead~\cite{Zhou2025}.
Second, decisions are inherently \emph{irreversible} under strict budgets: resource slack is consumed on the fly, while future requests and their modality mix remain unknown. Consequently, heuristic or reinforcement-learning-style policies may over-explore and exhaust budgets early, yet offer limited guarantees on long-horizon constraint satisfaction~\cite{Pei2025,yang2025moaoff}.
These challenges call for a principled online learning framework that enables lightweight per-round decisions while explicitly controlling cumulative resource consumption~\cite{Jia2025, Agrawal2016,DBLP:conf/aistats/HanZWXZ23}.

\begin{figure}[t]
\centering      
\includegraphics[width=\columnwidth]{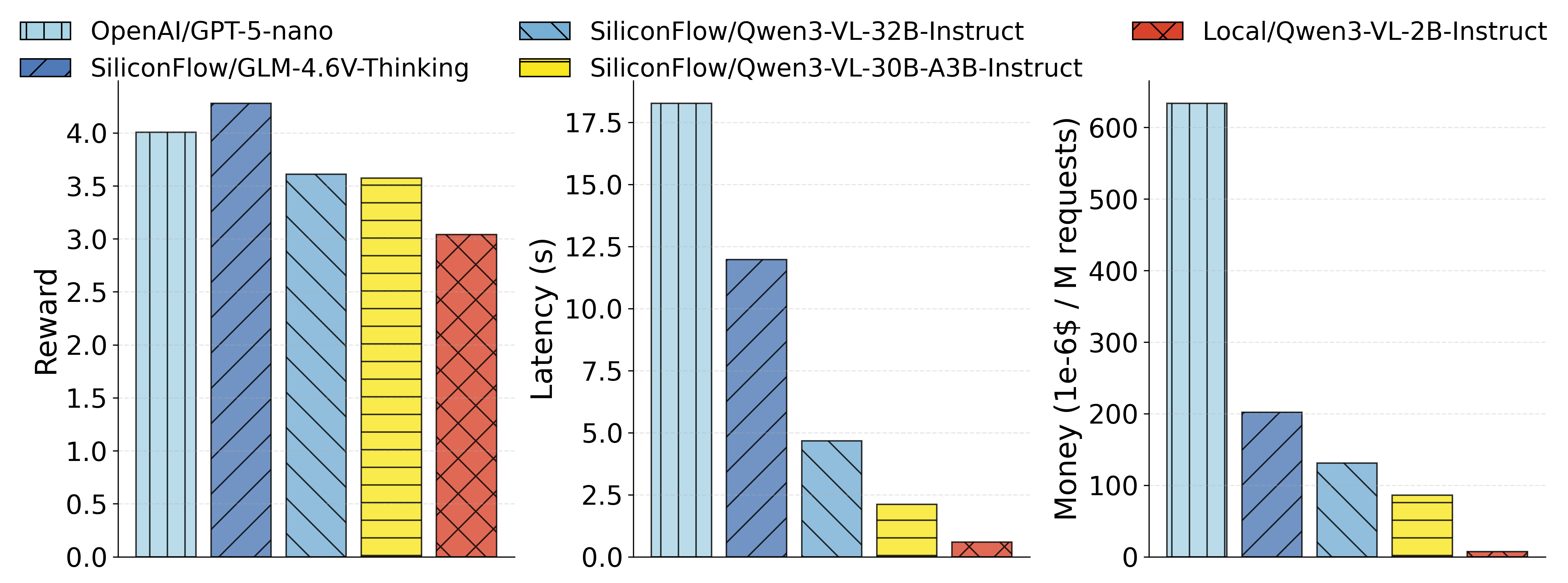}
\caption{Comparison of reward, latency, and monetary cost across MLLM inference backends on a mixed multi-modal task trace.}
\vspace{-6mm}
\label{fig:distribution}  
\end{figure}

To tackle the above challenges, we formulate resource-constrained MLLM inference scheduling as a {m}ulti-modal, {m}ulti-constraint {c}ontextual {b}andit {w}ith {k}napsacks (M$^2$-CBwK) problem. This formulation extends the classic CBwK framework~\cite{DBLP:conf/aistats/HanZWXZ23,Jia2025} by explicitly modeling structured multi-modal contexts and heterogeneous cost dimensions intrinsic to real MLLM deployments.
Based on this abstraction, we develop \emph{M$^2$-CMAB} (\underline{M}ulti-modal \underline{M}ulti-constraint \underline{C}ontextual \underline{M}ulti-\underline{A}rmed \underline{B}andit), an online MLLM task scheduling framework consisting of three components: a multi-adapter-enhanced \emph{Predictor}, an online primal-dual-based \emph{Constrainer}, and a two-phase \emph{Scheduler} for exploration-exploitation. We highlight the main contributions as follows.

\textbf{\ding{182} Efficient MLLM Representation.}
We extract a compact CLS-attentive task context from a frozen MLLM and update only lightweight reward/cost adapters online, enabling low-overhead, action-specific prediction while preserving generative capability of the local backend.
\textbf{\ding{183} Decoupled Long-Horizon Constraint Control.}
We maintain online Lagrange multipliers via a primal-dual update, decoupling cumulative constraints enforcement from per-round decisions under irreversible, multi-dimensional budgets.
\textbf{\ding{184} Efficient Contextual Scheduling.}
We propose a contextual exploration-exploitation policy that jointly uses multi-modal context and constraint feedback, and establish a regret guarantee under multiple knapsack constraints. 
\textbf{\ding{185} Realistic Benchmark and Strong Empirical Results.}
We build a benchmark with five backends, six datasets, and seven methods, and show \emph{M$^2$-CMAB} outperforms other state-of-the-art scheduling methods, achieving up to 14.18\% higher average reward, and most closely matches an oracle-aided counterpart that uses perfect per-round observed information.

\begin{figure*}[t]      
\centering      
\includegraphics[width=\linewidth]{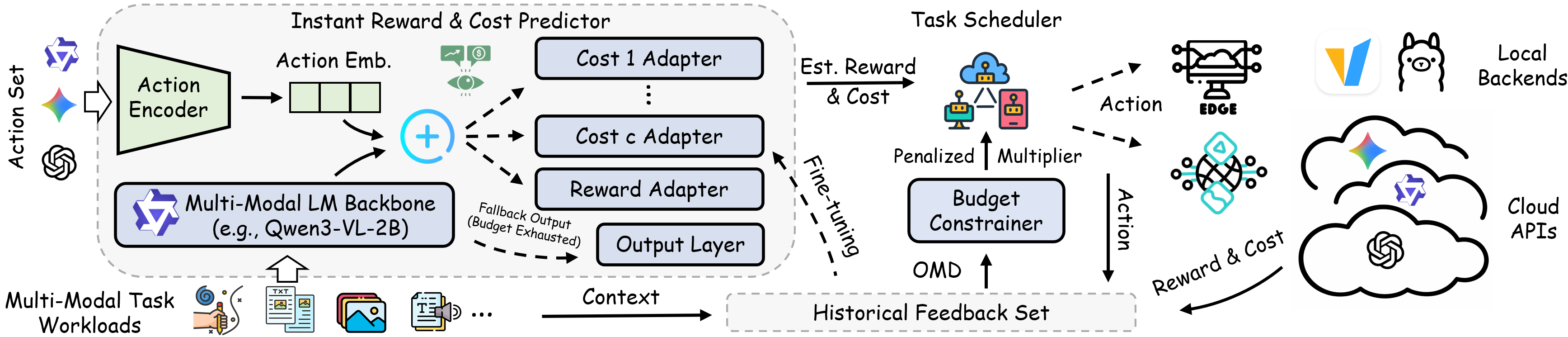}   
\caption{Overview of the \emph{M$^2$-CMAB} decision-making pipeline.}
\vspace{-5mm}
\label{fig:M$^2$-CMAB}  
\end{figure*}

\section{Preliminary and Problem Formulation}\label{sec:preliminaries}

Our study focuses on designing an online scheduling algorithm to address dynamic decision-making problems for MLLM inference tasks. At each round $t \in \{1,\ldots,T\}$, the scheduling algorithm receives an incoming task with the multi-modal context $\bm{x}^t = \{\bm{x}^t_{\omega}\}_{\omega \in \Omega}$, where $\Omega$ denotes the set of available modalities (e.g., text, vision, audio, and auxiliary metadata).
For each modality $\omega$, $\bm{x}^t_{\omega} \in \mathbb{R}^{d_{\omega}^{\text{in}} \times d_{\omega}^{\text{tok}}}$ represents its token-level embedding, where $d_{\omega}^{\text{tok}}$ denotes the number of tokens and $d_{\omega}^{\text{in}}$ denotes the per-token embedding dimensionality for modality $\omega$, jointly forming the task-level multi-modal prompt input to the MLLM. 

Given the context $\bm{x}^t$, the scheduler selects an action $a^t \in \mathcal{A}$ from the action space (e.g., a specific backend combining the execution environment, model choice, and reasoning mode) according to a learnable policy $\pi(\cdot \mid \bm{x}^t)$. The objective of the policy is to maximize the overall quality of service (QoS), quantified as the expected cumulative reward $\mathbb{E}\!\left[\sum_{t=1}^{T} r^t_{a^t}\right]$, while satisfying multiple long-term budget constraints $\bm{\Phi} \in \mathbb{R}^C_+$ across $C$ resource dimensions (e.g., monetary cost and latency).
Accordingly, the problem is formulated as a multi-modal, multi-constraint contextual bandit with knapsacks (M$^2$CBwK) problem, denoted by $\mathbb{P}1$, where $\mathcal{R}(\cdot,\cdot)$ denotes the reward function induced by the selected action and the observed context, $\bm{\varphi}^t_a = \bm{\mathcal{C}}(a,\bm{x}^t)$ represents the corresponding resource consumption vector in round $t$, and $\mathbbm{1}$ represents the all-ones vector. 
The constraints in~\eqref{EQ:c1} are applied component-wise and are dimensionless. Finally, $\Omega$ denotes the set of supported modalities that together constitute the multi-modal context $\bm{x}^t$.
\begin{subequations} \begin{align} 
\mathbb{P}1: \max_{\pi} &\;\mathbb{E}\!\left[\sum_{t=1}^T r^t_{a}\right], \label{eq:md_obj} \\
\text{s.t.} \quad &\sum_{t=1}^{T} \frac{\bm{\varphi}^t_{a}}{\bm{\Phi}} \;\preceq\; \mathbbm{1}, \label{EQ:c1}\\ 
& r^t_{a} := \mathcal{R}\left(a,\bm{x}^t\right),\; \bm{\varphi}^t_{a} := \bm{\mathcal{C}}\left(a,\bm{x}^t\right), \label{EQ:c2}\\ 
& a \sim \pi(\cdot \mid \bm{x}^{t}),\\
& \bm{x}^{t}=\{\bm{x}^{t}_{\omega}\}_{\omega\in\Omega},\; \forall t. \label{EQ:c3}
\end{align} 
\end{subequations}
The online task offloading problem $\mathbb{P}1$ captures the intrinsic dynamics of sequential multi-modal inference tasks under uncertainty. In this setting, MLLM inference outcomes are inherently stochastic: for a given context $\bm{x}^t$ and action $a^t$, the realized reward and cost depend on complex task-backend interactions and are only revealed after execution. Meanwhile, offloading decisions must be made irrevocably without access to future task arrivals, requiring the scheduler to adapt online while jointly managing multiple limited macro-budgets so that costly resources are reserved for sufficiently complex tasks. These characteristics give rise to two core challenges. \textbf{C1 (Uncertain payoff modeling)} arises from the difficulty of accurately characterizing $\mathcal{R}\left(a^t,\bm{x}^t\right)$ and $\bm{\mathcal{C}}\left(a^t,\bm{x}^t\right)$, since reward and cost distributions induced by MLLM inference admit no closed-form expressions. \textbf{C2 (Non-myopic budget coupling)} stems from the fact that greedy per-round optimization may prematurely exhaust unrecoverable budgets, resulting in suboptimal long-term QoS performance. Additionally, we summarize the key notations in Table~\ref{Table:Major Notations} in Appendix~\ref{appendix: Major Notations}.

\section{M$^2$-CMAB Design}\label{sec:algorithm design}

Our \emph{M$^2$-CMAB} framework to solve $\mathbb{P}1$, illustrated in Fig.~\ref{fig:M$^2$-CMAB}, consists of three coupled components. The \textit{Predictor} uses an adapter-based MLLM to estimate the reward $r^{t}$ and cost vector $\bm{\varphi}^t$ from the given action and context, addressing uncertain payoff modeling (\textbf{C1}). The \textit{Constrainer} then applies budget-aware penalties to regulate per-round resource usage, mitigating non-myopic budget coupling (\textbf{C2}). Finally, the \textit{Scheduler} combines the predicted rewards and penalized costs to make the exploration-exploitation decision.

\subsection{Reward and Cost Predictor Design}\label{sec:Reward Prediction}
Accurate reward and cost prediction is central to efficient online decision-making in \emph{M$^2$-CMAB}, yet extracting task-aligned embeddings from MLLMs for scalar regression is non-trivial. 
A straightforward approach is to fine-tune the backbone MLLM model~\cite{Ge2025, Huang2025}. While effective for certain downstream tasks, directly fine-tuning backbone models for reward and cost regression is often mismatched to online decision-making settings. 
First, trajectory-level feedback in online scheduling is typically noisy and non-stationary, making fine-tuned embeddings prone to overfitting transient patterns and degrading representation stability over time. 
Second, repeated fine-tuning incurs substantial computational and system overhead, rendering it impractical for scenarios that require fast adaptation training and frequent prediction.

Freezing the backbone alleviates these issues but introduces an additional challenge. In frozen-backbone settings, directly extracting embeddings from the final output token remains suboptimal, as its hidden state is primarily optimized for next-token prediction and does not reliably summarize global task semantics. This mismatch becomes more pronounced in long or multi-modal contexts, where scalar reward and cost regression depends critically on consistent task-level representations rather than generative diversity.

To address these challenges, we adopt a CLS-attentive multi-adapter-based prediction pipeline built on one frozen backbone. Specifically, we freeze the parameters $\bm{\vartheta}$ of MLLM backbone $f_{\bm{\vartheta}}(\cdot)$ to preserve its generative capability and representation consistency. To avoid directly using the final token output, an explicit \texttt{[CLS]} token is introduced as a global semantic anchor at the beginning of the multi-modal context input tokens $\bm{x}^t$. Its multi-head self-attention weights $\bm{\alpha}_h^t = \bm{\mathrm{I}}_{h}^t[\mathrm{CLS},:]$ are then used to perform attention-based pooling over the hidden states $\bm{\mathrm{H}}^t$, where $\bm{\mathrm{H}}^t \in \mathbb{R}^{L \times d_{\mathrm{hid}}}$ and $\bm{\mathrm{I}}^t \in \mathbb{R}^{H \times L \times L}$ are extracted from the last output layer of the frozen backbone, thereby yielding a task hidden representation $\bm{z}_x^t = f_{\bm{\vartheta}}(\bm{x}^t)$ aligned with context semantics. 

Given the task representation $\bm{z}_x^t$, prediction is performed by conditioning on both the contextual information and the selected action. To this end, we concatenate $\bm{z}_x^t$ with the action embedding $\bm{z}_a$, obtained from a language modality embedder, and feed the resulting representation into a set of lightweight adapters $\mathcal{F}_{\bm{\theta}}(\bm{z}_a || \bm{z}_x^t)$, parameterized by $\bm{\theta}_r^t$ and $\{\bm{\theta}_\varphi^{t,(c)}\}_{c=1}^C$ for reward and per-dimension constraint estimation, respectively. These adapters serve as task-specific heads that map the shared representation to scalar reward and cost estimates, enabling efficient and stable prediction without modifying the frozen backbone parameters.

Specifically, the reward adapter can be formulated
$
    \hat{r}^t_a = \hat{\mathcal{R}} \!\left(a,\bm{x}^t\, ;\, \bm{\Theta}^t_r\right),
$
where $\bm{\Theta}^t_r=\{\bm{\vartheta},\bm{\theta}^t_r\}$
includes the frozen MLLM backbone parameters and the trainable adapter
parameters for reward regression. Likewise, for $C$
budget-constraint dimensions (e.g., monetary cost, latency, energy), we
assign each dimension $c$ an independent cost adapter
$
    \hat{\varphi}^{t,(c)}_{a} 
    = \hat{\mathcal{C}}\!\left(a,\bm{x}^t\, ;\, \bm{\Theta}^{t,{(c)}}_\varphi \right),
$
producing the cost vector
$\hat{\bm{\varphi}}^t_a$ that
characterizes the estimated per-round resource consumption for action
$a$ with parameter $\bm{\Theta}^{t,{(c)}}_\varphi=\{\bm{\vartheta},\bm{\theta}^{t,(c)}_\varphi\}$.

To train these predictors, we adopt a unified regularized least-squares objective. Let $\mathcal{F}_{\bm{\theta}}$ denote either the reward predictor or a cost predictor for dimension $c$, and let $y^\tau$ represent its corresponding supervision signal, i.e., reward $r^\tau$ or cost
$\varphi^{\tau,(c)}$. The adapter parameters $\bm{\theta}$ are updated by
\begin{equation}\label{eq:generic_reg_obj_reordered}
    \min_{\bm{\theta}} \mathcal{J}(\bm{\theta})
    = \frac{1}{2|\Upsilon^t|}
      \sum_{\tau < t}\!
      \bigl(\mathcal{F}_{\bm{\theta}}^\tau - y^\tau\bigr)^{2}
      + \frac{\eta}{2}\,\|\bm{\theta}-\bm{\theta}^0\|_2^{2},
\end{equation}
where $\Upsilon^t= \{(r^\tau, \bm{\varphi}^\tau, a^\tau, \bm{x}^\tau) \mid \tau < t, \tau\in\mathbb{N}_+\}$ is the historical set of observations before round $t$, $\bm{\theta}^0$ is the initialization of the adapter parameters, and $\eta>0$ is the regularization coefficient. Instantiating
\eqref{eq:generic_reg_obj_reordered} with $(\mathcal{F}_{\bm{\theta}},y^\tau)$
equal to $(\hat{r}_{a}^\tau,r^\tau)$ gives the reward regression loss
$\mathcal{J}_r(\bm{\theta}^t_r)$, while choosing
$(\hat{\varphi}_{a}^{\tau,(c)},\varphi^{\tau,(c)})$ gives the cost loss
$\mathcal{J}^{(c)}_\varphi(\bm{\theta}^{t,(c)}_\varphi)$ for each constraint $c$. Implementation details of the reward and cost predictors are deferred to Appendix~\ref{appendix:adapter_details}.

\subsection{Long-Term Constrainer Design}\label{sec:Online Dual Variables Update}
To address challenge \textbf{C2}, we introduce a \textit{Constrainer} that decouples time-dependent budget constraints via an online queue-based optimization framework~\cite{ouyang2023dynamic, Zhang2025TSC}, enabling per-round budget allocation in $\mathbb{P}1$. 
Specifically, the long-term budget constraint in Eq.~\eqref{EQ:c1} can be reformulated as a queue-recursion process by associating each constraint with a Lagrange multiplier $\bm{\lambda}^{t}\in\mathbb{R}^C_+$ at round $t$. 
Under this formulation, the cumulative objective in Eq.~\eqref{eq:md_obj} admits a Lagrangian form 
$\mathcal{L}(\bm{a}, \bm{\lambda}) 
= \sum_{t=1}^T \mathcal{L}^{t}\,(a ,\bm{\lambda}^{t}), 
$
where the per-round Lagrangian function is given by
\begin{equation}
\begin{split}
      &\mathcal{L}^{t}\,(\,a,\bm{\lambda}^{t})=
       - r^{t}_{a} +  \langle\frac{\bm{\varphi}^{t}_a}{\bm{\Phi}} - \frac{1}{T},\bm{\lambda}^{t}\rangle.
\end{split}
\end{equation}
Here, $\bm{\Phi}\in\mathbb{R}^C_+$ denotes the total budget vector, $\bm{\varphi}^{t}_a\in\mathbb{R}^C_+$ represents the per-round resource consumption, and $\langle\cdot,\cdot\rangle$ is the standard inner product.
This reformulation converts $\mathbb{P}1$ into a saddle-point problem that maximizes reward while penalizing budget violations,
$
\min_{\bm{a}}
\max_{\bm{\lambda}}\mathcal{L}(\bm{a}, \bm{\lambda})\; \text{s.t.}\; \eqref{EQ:c2}-\eqref{EQ:c3}.
$ 
Importantly, the use of Lagrange multipliers enables temporal decoupling, allowing the optimization to be carried out on a per-round basis by focusing on $\mathcal{L}^{t}(a^{t},\bm{\lambda}^{t})$.

To update the dual variables online, we apply the online mirror descent (OMD) algorithm~\cite{DBLP:journals/ftopt/Hazan16,Shai2012}, which performs iterative updates over the dual domain by minimizing the per-round objective:
\begin{equation}
\label{con:dual_problem}
\mathcal{D}^{t}(\bm{\lambda}^{t}) = \max_{a}   \left(r^{t}_a - \langle\frac{\bm{\varphi}^{t}_a}{\bm{\Phi}} -\frac{1}{T},\bm{\lambda}^{t}\rangle\right).
\end{equation}
The dual variable $\bm{\lambda}^{t}$ is constrained to the convex set
$\mathcal{V} = \{\bm{\lambda}\in\mathbb{R}^C_+:\|\bm{\lambda}\|_1 \leq \Lambda\}$,
where $\Lambda>0$ specifies the radius of the feasible region.
With the feasible set $\mathcal{V}$ determined, the \textit{Constrainer} updates the dual variable $\bm{\lambda}^{t+1}$ at each subsequent round according to
\begin{equation}
    \label{eq:lambdaupdate}
     \bm{\lambda}^{t+1}=\arg \min _{\bm{\lambda} \in \mathcal{V}}\left(\langle\nabla \mathcal{D}^{t}\left(\bm{\lambda}^{t}\right), \bm{\lambda}\rangle+\frac{\mathcal{B}\left(\bm{\lambda}, \bm{\lambda}^{t}\right)}{\varrho_t} \right), 
\end{equation}
where $\mathcal{B}(\cdot,\cdot)$ denotes the Bregman divergence induced by the negative-entropy generating function~\cite{DBLP:conf/aistats/HanZWXZ23}, and $\varrho_t$ is the step size of the OMD. This update corresponds to a mirror-descent step that balances the instantaneous dual gradient and proximity to the previous iterate.

Overall, the \emph{Constrainer} incorporates long-term budget constraints into per-round reward maximization by dynamically adjusting the dual variables $\bm{\lambda}^t$, thereby imposing adaptive and forward-looking penalties on budget consumption. 

\subsection{Task Scheduler Design}\label{Pirvacy Budget Allocation}

Leveraging prediction results and decoupled constraints, the final component of \emph{M$^2$-CMAB} assigns each action a scalar score, upon which a two-phase Contextual Multi-Armed Bandit (CMAB) framework is employed to balance the exploration-exploitation trade-off and make final online scheduling decisions under uncertainty.

%
\subsubsection{\textbf{Initial Phase}} The target of the {initial} phase is to  establish the space $\mathcal{V} = \left\{\bm{\lambda} \in \mathbb{R}^C_+:\,\|\bm{\lambda}\|_1 \leq \Lambda\right\}$ for \emph{Constrainer}, so that 
the  budget consumption can be properly penalized  in the exploration-exploitation phase.
Inspired by~\citet{DBLP:conf/aistats/HanZWXZ23}, we employ a regression-based procedure to estimate $\lambda$. The {initial} phase lasts $(A+1)\cdot T_0$ rounds to evaluate $\Lambda$, where $A$ denotes the size of the action space and $T_0$ is a hyper-parameter specifying the number of times each action is selected.
To ensure that the OMD algorithm achieves $\text{O}(\sqrt{T})$ regret bounds, our preferred choice is to set $\Lambda = T \cdot \frac{\text{OPT}}{\Phi_\text{min}}$(Lemma~\ref{lemma: OPT_Z}), where OPT is defined in Definition~\ref{def:OPT} and $\Phi_\text{min} = \|\bm{\Phi}^{-1}\|_\infty$. 
However, OPT cannot be solved accurately in an online scenario.
Therefore, we employ a regression algorithm in the {initial phase} including $(A + 1)\cdot T_0$ rounds to evaluate $\Lambda$.
Generally speaking, there are two steps in the {initial phase}. 

\textbf{Stage \ding{182}:} For the first $A\cdot T_0$ rounds, we play different $A$ actions in CMAB evenly for $T_0$ times to gather historical records set $\Upsilon^{A\cdot T_0} = \{(r^\tau, \bm{\varphi}^\tau, a^\tau, \bm{x}^\tau) \mid \tau \leq A\cdot T_0, \tau\in\mathbb{N}_+\}$.
With the historical records, we can train an adapter as an oracle to predict the reward and cost $\hat{r}_a^t,\,\forall a\in\mathcal{A},\,\forall t\in\mathcal{T}_0$, where $\mathcal{T}_0=\left\{t: A\cdot T_0+1 \leq t \leq (A+1) \cdot T_0\right\}$. 

\textbf{Stage \ding{183}:} For the latter total $T_0$ rounds, \emph{i.e.}, we select actions $a^t$ randomly and observe the reward $r^t$, where $t\in\mathcal{T}_0$. 
Then, we use the observed history s prediction model to estimate $\hat{\text{OPT}}\left(T_0\right)$ by solving the linear programming problem~\eqref{eq:linear_programming}, where $\bm{o}$ is the solution matrix.
Additionally, $T$ denotes the maximum number of training rounds, $C$ is the total number of constraints, $\mathbbm{1}$ represents the all-ones vector, and $\bm{\Phi}$ denotes the budget vector across all constraints. 
The term $\mathcal{M}(T_0)$ characterizes the aggregated estimation uncertainty arising from both the reward and cost regression oracles. 
Once all required parameters are obtained, the estimated value $\hat{\mathrm{OPT}}(T_0)$ can be computed by solving the linear program in Eq.~\eqref{eq:linear_programming} using a standard linear programming solver.
\begin{subequations}\label{eq:linear_programming} 
\begin{align}  
\hat{\text{OPT}}\left(T_0\right)&=\max_{\bm{o} \in\mathcal{O}} \frac{1}{T_0} \sum_{t \in \mathcal{T}_0} \sum_{a \in\mathcal{A}}  o_{a}^t\, \hat{\mathcal{R}}\!\left(a,\bm{x}^t\, ;\, \bm{\Theta}^t_r\right), \label{eq: linear_programming objective}\\ 
\text{s.t.}\quad \sum_{t \in \mathcal{T}_0}& \sum_{a\in\mathcal{A}} o_{a}^t\,\frac{\hat{\bm{\mathcal{C}}}\!\left(a,\bm{x}^t\, ;\, \bm{\Theta}^{t}_\varphi\right)}{T_0\bm{\Phi}}\preceq \frac{\mathbbm{1}}{T}+2 \frac{\mathcal{M}(T_0)}{\bm{\Phi}}, \\
\mathcal{M}(T_0) 
= &\sqrt{
A\,\left(\mathcal{E}_{r}(T_0)
+ d\, \mathcal{E}_{c}(T_0)\right)
+ 4\,\frac{\log (T\, C)}{T_0}
}.
\label{EQ:M(T_0)}
\end{align}
\end{subequations}
Upon obtaining the estimated value $\hat{\mathrm{OPT}}(T_0)$, we set 
$
\Lambda=\frac{T}{\Phi_\text{min}} \left(\hat{\text{OPT}}\left(T_0\right)+\mathcal{M}\left(T_0\right)\right).
$
Consequently, the estimated value of $\Lambda$ effectively controls the deviation induced by estimation uncertainty.
The detailed procedure for determining $\Lambda$ is provided in Alg.~\ref{alg: Initial phase} in Appendix~\ref{appedix: initial phase}.

\subsubsection{\textbf{Exploration-Exploitation Phase}} In this phase, \emph{Scheduler} leverages the outputs of \emph{Predictor} and \emph{Constrainer} to make online decisions. In general, the exploration-exploitation phase consists of three steps in each round $t \in \left((A+1)\cdot T_0,\, T\right]$, as follows:

\textbf{Step~\ding{182} (Generating Action Scores)}: After observing multi-modal context $\bm{x}^t$, \emph{Scheduler} receives $\hat{r}_a^t$ and $\hat{\bm{\varphi}}_a^t$ from \emph{Predictor} and the current multiplier $\bm{\lambda}^t$ from \emph{Constrainer} for all $a\in\mathcal{A}$, and then computes the predicted per-round Lagrangian as the action score:
\begin{equation}
\label{eq:score function}
\mathcal{S}^t(a)=\hat{r}_a^t-\langle\frac{\hat{\bm{\varphi}}^{t}_a}{\bm{\Phi}} -\frac{1}{T},\bm{\lambda}^{t}\rangle,\forall a \in\mathcal{A}.
\end{equation}
\textbf{Step~\ding{183} (Selecting the Optimal Action)}: Based on the estimated action scores, \emph{Scheduler} assigns exploration-exploitation sampling probabilities to all arms. Let $a^t_{\max}$ denote the action with the highest score $\mathcal{S}^{t}(a^t_{\max})$; the probability of sampling other action is then computed as:
\begin{equation}
\label{eq: probability_action_a}
    \mathcal{P}^{t}(a)=\frac{1}{A+\rho\,\bigl(\,\mathcal{S}^{t}\left(a^t_\text{max}\right)-\mathcal{S}^{t}\left({a}\right)\,\bigl)}, \forall a\in\mathcal{A}\,\backslash\, a^t_\text{max}, 
\end{equation}
where $\rho$ is a hyper-parameter controlling the extent to which \emph{M$^2$-CMAB} favors exploitation over exploration. When $\rho$ is small, $\mathcal{P}^{t}(a)$ approaches $\frac{1}{A}$, indicating that \emph{M$^2$-CMAB} samples actions more uniformly and thus prefers exploration. The selection probability of $a^t_{\text{max}}$ is given by 
\begin{equation}
\label{eq: probability_action_b}
    \mathcal{P}^{t}(a^t_\text{max})=1-\sum_{a\in\mathcal{A}, a \neq a^t_\text{max}} \mathcal{P}^{t}(a).
\end{equation}
The final action $a^{t}\sim\mathcal{P}^t(\cdot)$ is sampled according to the probability distribution defined in Eqs.~\eqref{eq: probability_action_a} and~\eqref{eq: probability_action_b}, under the context $\bm{x}^{t}$ and the variable $\bm{\lambda}^{t}$.

\textbf{Step~\ding{184} (Updating Penalties)}:
\emph{M$^2$-CMAB} leverages Constrainer to update the new dual variable vector $\bm{\lambda}^{t+1}$ based on Eq.~\eqref{eq:lambdaupdate}, which will be used for budget allocation in the next round on the fly.
This process navigates a dynamic stream of multi-modal inference tasks, where each task is strategically scheduled to maximize inference performance while adhering to resource constraints. The details of \emph{M$^2$-CMAB} are shown in Alg.~\ref{alg:The M$^2$-CMAB Algorithm} in Appendix.~\ref{appendix: Regret Analysis Details}.
\section{Regret Analysis} \label{sec: Regret Analysis}
We next analyze the regret performance of \emph{M$^2$-CMAB}. 
Following~\cite{Zhang2025TSC}, we define $\mathrm{OPT}$ in Definition~\ref{def:OPT} in Appendix~\ref{appendix: Regret Analysis Details} as the optimal expected per-round reward achievable under the prescribed resource budget constraints.
Let $T\cdot \text{OPT}$ denote an upper bound on the cumulative optimal reward over $T$ rounds in the online setting~\cite{Agrawal2016}. 
The regret of \emph{M$^2$-CMAB} is defined as $\mathrm{Reg}(T)=T\cdot \text{OPT}-\mathbb{E}\!\left[\sum_{t=1}^T r^t_{a}\right]$, and we establish the following regret guarantee.
\begin{theorem} \label{theorem: regret}
Let $T$ denote the total number of scheduling rounds, $T_0$ the number of times each action is selected in the initial phase, and $A$ the number of actions. Define $\Phi_{\mathrm{min}}=|\bm{\Phi}^{-1}|_\infty$. Then, we have:
\begin{equation}
\label{eq:regret_bound}
\begin{aligned}
\operatorname{Reg}(T)
&\;\lesssim\;
\left(\frac{T\,\mathrm{OPT}}{\Phi_{\min}}+1\right)
\Bigl(
 A T_0\;+ \\
& \sqrt{A\,T\!\left[\operatorname{Reg}^r(T)+\operatorname{Reg}^c(T)+
\log(C\,T)\right]}
\Bigr),
\end{aligned}
\end{equation}
where $\Phi_{\mathrm{min}} > \max\left\{(A+2)\,T_0,\; T\,\mathcal{M}(T_0)\right\}$ characterizes the feasible choice of $T_0$ and ensures that all budget constraints are satisfied.
Here, $\mathcal{M}(\cdot)$ is defined in~\eqref{EQ:M(T_0)}.
Moreover, $\operatorname{Reg}^r(T)$ and $\operatorname{Reg}^c(T)$ denote the cumulative regrets incurred by the reward and cost predictors, respectively, with respect to the optimal regression functions $\hat{\mathcal{R}}^*(a,\bm{x})$ and $\hat{\bm{\mathcal{C}}}^*(a,\bm{x})$ over $T$ rounds.
\end{theorem}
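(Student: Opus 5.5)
The proof follows the reduction of \citet{DBLP:conf/aistats/HanZWXZ23} for contextual bandits with knapsacks via online regression oracles, adapted to our per-round Lagrangian and Constrainer. Split the horizon into the initial phase, of length $(A+1)T_0$, and the exploration--exploitation phase. Define the stopping time $\tau$ as the first round in which some budget dimension would be exhausted. Then decompose
\[
\mathrm{Reg}(T)\le \underbrace{(A+1)T_0\,\mathrm{OPT}}_{\text{initial phase}}+\underbrace{\mathbb{E}\Bigl[\textstyle\sum_{t\le\tau}(\mathrm{OPT}-r^t_{a^t})\Bigr]}_{\text{learning}}+\underbrace{\mathbb{E}[T-\tau]\,\mathrm{OPT}}_{\text{early stopping}} .
\]
Rewards lie in $[0,1]$. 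The first term is therefore at most $(A+1)T_0$, and this produces the $AT_0$ contribution after multiplying by the prefactor $(T\,\mathrm{OPT}/\Phi_{\min}+1)$.

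\textbf{Step 1 (the initial phase yields a valid $\Lambda$).} Use Lemma~\ref{lemma: OPT_Z} together with a concentration argument for the regression oracles trained in Stage~\ding{182}. With probability $1-O(1/T)$, a union bound over the $C$ constraints and Freedman/Azuma inequalities give
\[
|\hat{\mathrm{OPT}}(T_0)-\mathrm{OPT}|\le \mathcal{M}(T_0).
\]
The LP in Eq.~\eqref{eq:linear_programming} has constraint slack $2\mathcal{M}(T_0)/\bm{\Phi}$, which guarantees that the true optimal allocation is feasible for it. Hence
\[
\mathrm{OPT}\le \hat{\mathrm{OPT}}(T_0)+\mathcal{M}(T_0)\le \mathrm{OPT}+2\mathcal{M}(T_0).
\]
It follows that $T\,\mathrm{OPT}/\Phi_{\min}\le\Lambda\lesssim T\,\mathrm{OPT}/\Phi_{\min}+1$, using $\Phi_{\min}>T\mathcal{M}(T_0)$. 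This is where the prefactor in Eq.~\eqref{eq:regret_bound} comes from.

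\textbf{Step 2 (the learning term via the Lagrangian).} For $t\le\tau$, add and subtract $\langle \bm{\varphi}^t_{a^t}/\bm{\Phi}-1/T,\bm{\lambda}^t\rangle$. Three pieces result.

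\emph{(i) Comparison with the benchmark.} Take the optimal static policy from Definition~\ref{def:OPT}. Its expected per-round Lagrangian is at least $\mathrm{OPT}$ for every $\bm{\lambda}\in\mathcal{V}$, because its expected consumption is at most $\bm{\Phi}/T$.

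\emph{(ii) The inverse-gap-weighting step.} The sampling rule in Eqs.~\eqref{eq: probability_action_a}--\eqref{eq: probability_action_b}, applied to the score $\mathcal{S}^t$, is the SquareCB rule. The standard per-round inequality (Foster--Rakhlin) bounds the expected gap between the true Lagrangian of the best action and that of the chosen action by
\[
\frac{A}{\rho}+\rho\,(1+\|\bm{\lambda}^t\|_1)^2\,\mathbb{E}\bigl[\text{squared prediction errors}\bigr].
\]
Summing over rounds, the squared errors of the reward and cost adapters add up to $\mathrm{Reg}^r(T)$ and $\mathrm{Reg}^c(T)$, relative to the best regressors $\hat{\mathcal{R}}^*$ and $\hat{\bm{\mathcal{C}}}^*$. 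Converting these to expected quantities costs a $\log(CT)$ term.

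\emph{(iii) The dual update.} The OMD update in Eq.~\eqref{eq:lambdaupdate} over $\mathcal{V}$, with the negative-entropy Bregman divergence, has regret $O(\Lambda\sqrt{T\log C})$ against any fixed $\bm{\lambda}\in\mathcal{V}$.

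Choose $\rho\asymp\sqrt{AT/(\mathrm{Reg}^r+\mathrm{Reg}^c+\log(CT))}/\Lambda$. This balances (ii) to $\Lambda\sqrt{AT[\mathrm{Reg}^r(T)+\mathrm{Reg}^c(T)+\log(CT)]}$, which is the second term of the bound.

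\textbf{Step 3 (the early-stopping term).} This is the step I expect to be the main obstacle. If $\tau<T$, some coordinate $c$ has cumulative normalized consumption reaching $1$. Compare the OMD iterates with the fixed comparator $\bm{\lambda}=\Lambda\bm{e}_c$. The dual term then contributes at least $\Lambda\bigl(1-\tau/T\bigr)$ minus the errors already controlled in Step 2. Since $\Lambda\ge T\,\mathrm{OPT}/\Phi_{\min}$, the loss $(T-\tau)\,\mathrm{OPT}$ is absorbed into the same error terms. The delicate point is that the comparator must be chosen after the fact, depending on $\tau$ and $c$, while the regret bound must hold uniformly. I would handle this by invoking the OMD bound for all comparators in $\mathcal{V}$ simultaneously, together with an optional-stopping argument for the martingale differences up to $\tau$.

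The condition $\Phi_{\min}>(A+2)T_0$ ensures that the initial phase cannot itself exhaust any budget. Combining Steps 1--3 gives Eq.~\eqref{eq:regret_bound}.
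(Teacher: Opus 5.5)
Your architecture matches the paper's proof. That proof cites Theorem~A.1 of \citet{DBLP:conf/aistats/HanZWXZ23} for the exploration--exploitation phase with radius $\Lambda$. It uses Lemma~\ref{lemma: OPT_Z} together with $\Phi_{\min}>T\mathcal{M}(T_0)$ to get $\Lambda\lesssim T\,\mathrm{OPT}/\Phi_{\min}+1$, and then adds an initial-phase charge. Your Steps~1--3 unpack those black boxes. The comparator-after-the-fact issue you flag in Step~3 is harmless: the OMD bound is deterministic and holds for every $\bm{\lambda}\in\mathcal{V}$ and every prefix at once.

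\textbf{The gap: where the $AT_0$ term comes from.} The direct reward loss of the initial phase is only $(A+1)T_0\,\mathrm{OPT}\le(A+1)T_0$, so multiplying it by the prefactor is just slack. The term that genuinely carries the prefactor is the budget spent during the initial phase, and Step~3 omits it. The dual sum runs only over exploration--exploitation rounds. At $\tau$ you know $\sum_{t\le\tau}\varphi^t_c\ge\Phi_c-1$ for the binding coordinate, but the exploration--exploitation rounds alone account for only at least $\Phi_c-1-(A+1)T_0$. With unnormalized costs in $[0,1]$, the comparator $\Lambda\bm{e}_c$ therefore gives
\[
\Lambda\!\!\sum_{(A+1)T_0<t\le\tau}\!\!\Bigl(\varphi^t_c-\frac{\Phi_c}{T}\Bigr)\;\ge\;\Lambda\,\Phi_c\Bigl(1-\frac{\tau}{T}\Bigr)-\Lambda\bigl((A+1)T_0+1\bigr),
\]
not $\Lambda\Phi_c(1-\tau/T)$. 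The first piece absorbs $(T-\tau)\,\mathrm{OPT}$. The second is an extra regret of order $\Lambda AT_0\lesssim(T\,\mathrm{OPT}/\Phi_{\min}+1)AT_0$, which is exactly the theorem's first term. The paper's proof charges $\Lambda$, not $\mathrm{OPT}$, per initial-phase round, which is the right size for this effect. The condition $\Phi_{\min}>(A+2)T_0$ only keeps the initial phase from exhausting a budget. It does not make that consumption free, because uniform play over all backends can spend far faster than $\bm{\Phi}/T$.

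\textbf{Two further repairs.} First, make the normalization consistent. You add and subtract the normalized term $\langle\bm{\varphi}^t/\bm{\Phi}-1/T,\bm{\lambda}^t\rangle$. With it, the comparator $\Lambda\bm{e}_c$ gains only about $\Lambda(1-\tau/T)$, so absorbing $(T-\tau)\,\mathrm{OPT}$ needs $\Lambda\ge T\,\mathrm{OPT}$. The bound $\Lambda\ge T\,\mathrm{OPT}/\Phi_{\min}$ falls short by the factor $\Phi_{\min}>(A+2)T_0$, reading $\Phi_{\min}$ as the smallest budget, as that condition intends. The radius $T\,\mathrm{OPT}/\Phi_{\min}$ belongs with the unnormalized Lagrangian $r-\langle\bm{\varphi}-\bm{\Phi}/T,\bm{\lambda}\rangle$ of Han et al. That is the Lagrangian used in the display above, and under it your Step~2 goes through unchanged. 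The paper's notation has the same mismatch; its proof never exposes it because it cites Theorem~A.1 rather than redoing the stopping argument.

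Second, in Step~1 the slack $2\mathcal{M}(T_0)/\bm{\Phi}$ does not give $\hat{\mathrm{OPT}}(T_0)\le\mathrm{OPT}+\mathcal{M}(T_0)$. Raising the per-round budget from $\bm{\Phi}/T$ to $\bm{\Phi}/T+2\mathcal{M}(T_0)$ can increase the LP value by a factor of order $1+T\mathcal{M}(T_0)/\Phi_{\min}$; this is why Lemma~\ref{lemma: OPT_Z} has a product form. Under $\Phi_{\min}>T\mathcal{M}(T_0)$ that factor is a constant, so your conclusion $\Lambda\lesssim T\,\mathrm{OPT}/\Phi_{\min}+1$ survives.
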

\begin{proof}
The detailed proof is provided in Appendix~\ref{appendix: Regret Analysis Details}.
\end{proof}
\begin{remark}[\textit{Discussions and Open Issues}]
The regret bound in Theorem~\ref{theorem: regret} is modular with respect to the choice of the reward and cost estimators and holds for any estimator achieving sublinear estimation regret~\cite{Wu2015}. 
Under this abstraction, existing results on GP-based predictors with kernelized confidence bounds imply that, assuming unbiased cost estimation, the induced estimation regret is on the order of $\mathrm{O}\!\left(T^{3/4}(\log T)^{1/4}\right)$~\cite{Zhang2025TSC}. 
In a similar spirit, prior work on neural contextual bandits indicates that neural estimators can also achieve sublinear regret (e.g., $\tilde{\mathrm{O}}(\sqrt{T})$) under appropriate regularity and over-parameterization assumptions when coupled with confidence-based exploration mechanisms~\cite{Li2025}. 
Extending such estimation regret guarantees to more general MLLM-based predictors under the decoupled scheduling scheme considered in this work remains an open problem.
\end{remark}

\begin{figure*}[tp]
  \centering
  \includegraphics[width=\textwidth]{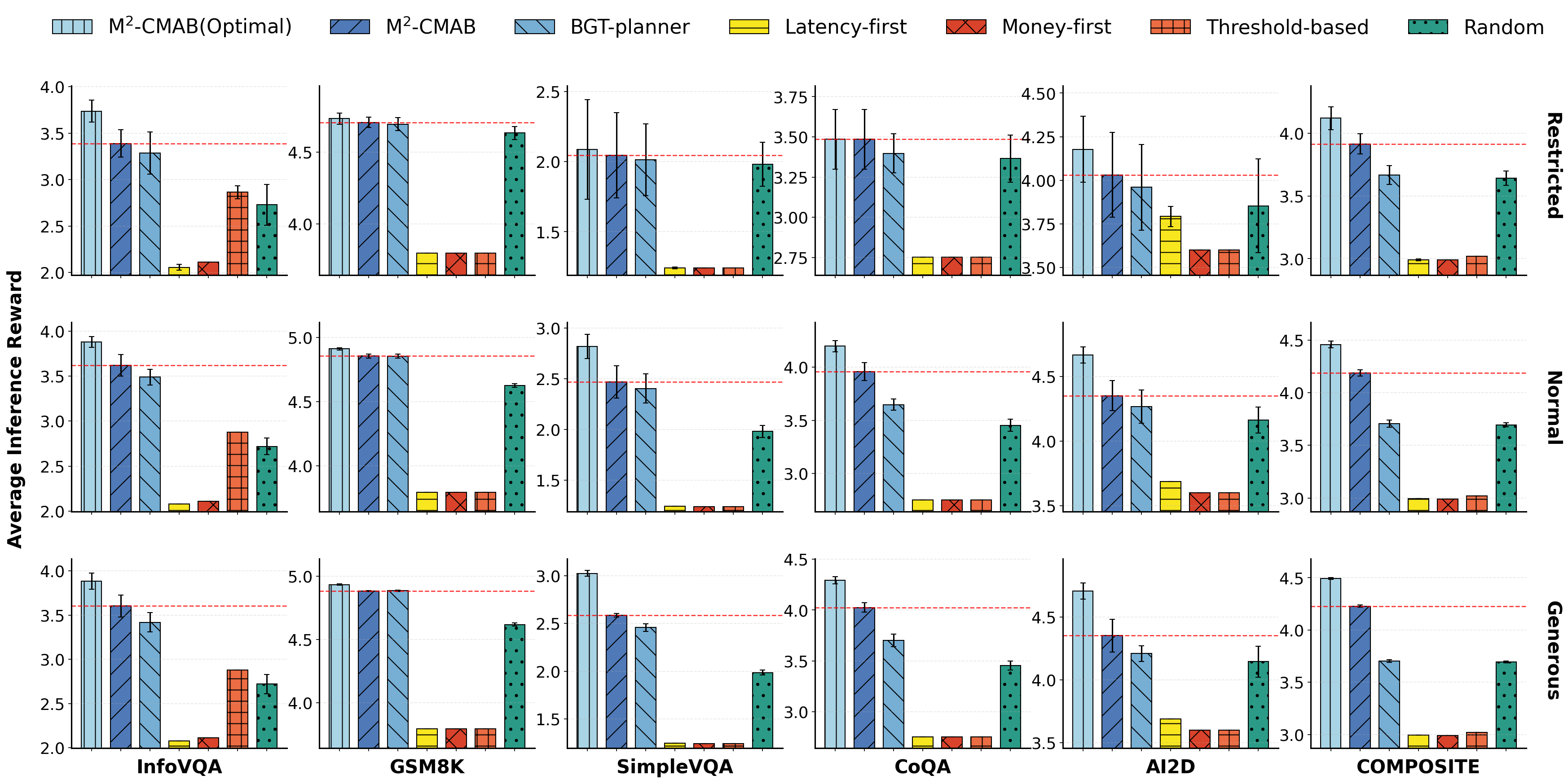}
  \caption{Average inference reward of \emph{M$^2$-CMAB} and baselines across six datasets and three budget regimes.}
  \label{fig:main-results}
  \vspace{-4mm}
\end{figure*}

\section{Performance Evaluation}
\label{sec:evaluation}

This section presents the experimental methodology for evaluating the proposed MLLM scheduling framework.
Specifically, we aim to answer three questions:
\textbf{Q1:} How robustly the framework performs under different budget regimes and task data distributions;
\textbf{Q2:} How sensitive the framework is to key hyperparameters of \textit{Scheduler};
\textbf{Q3:} How individual design adapters contribute to the final performance.

\subsection{Benchmark}
\label{sec:benchmark}
We build a trace-driven online multi-modal inference benchmark based on execution traces measured across multiple heterogeneous backends, comprising six datasets in total: five individual datasets and one composited-task dataset. Due to space constraints, some detailed descriptions of the benchmark setup are provided in Appendix~\ref{appendix: supply experiments}.

\textbf{Backends.}
The benchmark consists of five different execution backends for each incoming task, covering both local and cloud-based MLLMs.
Specifically, the local execution options include lightweight models deployed on-device, while the cloud execution options comprise large-scale, reasoning-enhanced models provided by commercial platforms, spanning diverse choices in model scale, architecture (mixture-of-experts (MoE) vs.\ non-MoE), inference mode (thinking or not), and deployment platform.

\textbf{Datasets.} Our evaluation uses six datasets, including five representative individual datasets covering diverse tasks and modality requirements, as well as one composited-task dataset:
InfoVQA~\cite{mathew2021infographicvqa}, GSM8K~\cite{cobbe2021gsm8k}, SimpleVQA~\cite{cheng2025simplevqa}, CoQA~\cite{reddy-etal-2019-coqa}, and AI2D~\cite{kembhavi2016diagram}. 
These datasets span a diverse set of tasks, including multi-turn dialogue and reading comprehension (e.g., CoQA), mathematical reasoning (e.g., GSM8K), multi-modal question answering and diagram-based visual reasoning (e.g., SimpleVQA and AI2D), as well as knowledge-intensive visual understanding (e.g., InfoVQA).
In addition to evaluating each dataset individually, we merge them to form a unified, large-scale dataset (COMPOSITE), enabling comprehensive evaluation across heterogeneous domain tasks.
In evaluation, the measured inference traces of each dataset are split into two disjoint subsets. One half is used for offline adapter preparation, with an 80/20 split for training and validation, while the other half is reserved for online evaluation, where tasks are randomly permuted and then revealed sequentially for online decision making and parameter updates~\cite{Zhang2026}.

\textbf{Inference Metadata.}
Following the protocol of~\citet{Yuan2025}, we execute each task instance on candidate execution models and record the resulting inference metadata.
System performance is evaluated along three dimensions:
\ding{182}~\textit{Inference quality.}
Response quality is quantified by a normalized reward $r \in [1,5]$, where responses from different task types are mapped to a unified five-level quality scale to ensure consistency across datasets.
\ding{183}~\textit{Latency.}
Latency $\varphi_{\text{LATENCY}}$ measures the end-to-end inference delay, including computation and network transmission when applicable. 
\ding{184}~\textit{Monetary cost.}
Monetary cost $\varphi_{\text{COST}}$ measures the economic expense of inference execution.
For local execution, it reflects energy consumption translated into cost, whereas for cloud execution it includes token-based billing and modality-specific charges.
Additional details of the inference metadata are provided in Appendix~\ref{appendix: supply experiments}.



\textbf{Baselines.} 
We evaluate our approach against several representative baselines: 
\ding{182}~\textit{Random}, which selects an execution backend uniformly at random in each round;
\ding{183}~\textit{Latency-first}, which greedily selects the execution model with the lowest predicted latency according to our trained latency predictor;
\ding{184}~\textit{Money-first}, which greedily selects the execution model with the lowest predicted monetary cost using our trained cost predictor; 
\ding{185}~\textit{BGT-planner}~\cite{Zhang2025TSC}, a state-of-the-art CMAB-based budget allocation framework; 
\ding{186}~\textit{Threshold-based}~\cite{Li2023,Zhang2026}, an online policy that selects actions based on a predicted utility-to-cost ratio, where multiple predicted cost dimensions are aggregated by averaging to adapt the original single-cost formulation
\ding{187}~\textit{Optimal}, an oracle-aided upper bound that makes decisions using perfect per-round reward and cost observations, representing the best achievable performance of \emph{M$^2$-CMAB} under ideal estimation.
For \ding{185} and \ding{186}, we adhere to the experimental settings and hyperparameter configurations reported in the original works.

\subsection{Evaluation Configuration}
\label{sec:system}

\textbf{Backbone Model.}
The scheduling policy is driven by a locally deployed \textit{Qwen3-VL-2B-Instruct} model with a frozen backbone, which produces task-level multimodal representations for scheduling. Only lightweight reward and cost adapters are trained to predict the expected reward, latency, and monetary cost given the task context and candidate actions. 
When a local execution decision is selected, the cached intermediate states are reused and the original output head is invoked to directly generate the final response. 

\textbf{Budget regimes.}
We consider three budget regimes for both latency and monetary cost, derived from the aggregated cost of the five candidate actions. 
Specifically, \emph{Restricted} sets the budget to the minimum aggregated cost value across all actions, \emph{Normal} to the second-smallest aggregated cost, and \emph{Generous} to the median aggregated cost.


\textbf{Hyperparameters.}
The key hyperparameters used in the scheduling and learning framework are summarized in Appendix~\ref{app:impl}.
In addition, we study the effect of the scheduling parameter $T_0$ through a sensitivity analysis on the fraction of rounds assigned to the initial phase.

\begin{figure*}[tp]
  \centering
      \includegraphics[width=\textwidth]{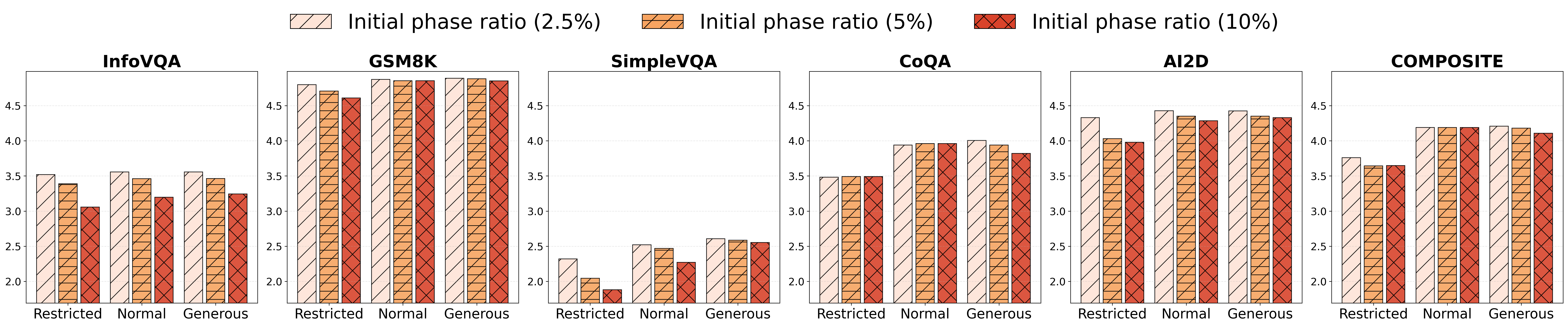}
    \caption{Sensitivity of \emph{M$^2$-CMAB} to the initial phase ratio, i.e., $\frac{(A+1)T_0}{T}$, evaluated by average inference reward (y-axis). Bars indicate the percentage of total rounds allocated to the initial phase, with three groups per subfigure corresponding to three budget regimes.}
  \label{fig:robustness}  
    \includegraphics[width=\textwidth]{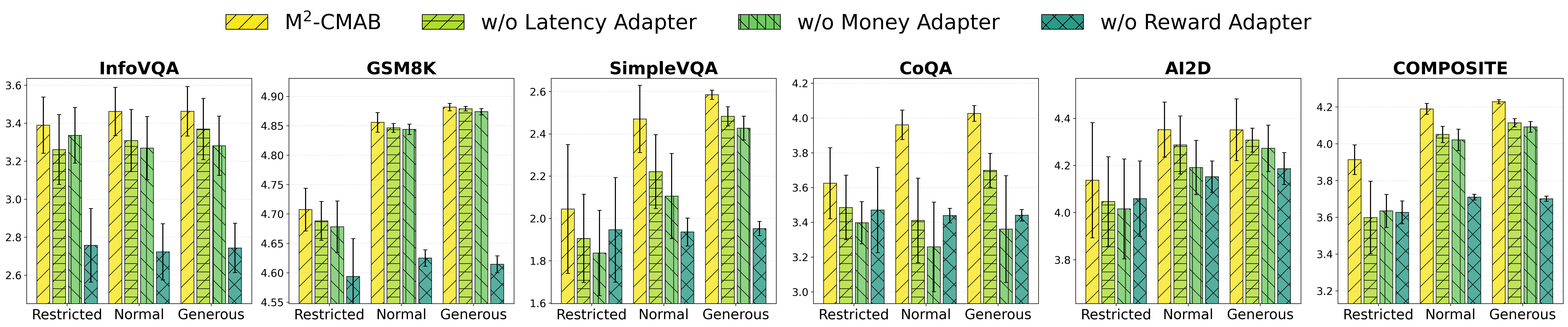}
      \caption{Ablation study of \emph{M$^2$-CMAB} across six datasets and three budget levels, illustrating the contribution of latency, money and reward predictor to the average inference reward (y-axis).}
  \label{fig:ablation}  
  \vspace{-5mm}
\end{figure*}

\subsection{Experimental Results and Analysis}

\textbf{Overall Performance Comparison (Answer for Q1).} Since each algorithm terminates upon exceeding either the latency or monetary budget, resulting in varying numbers of executed rounds, we adopt the \textbf{average inference reward} across executed rounds as the primary evaluation metric to ensure fair comparison.
Each experiment is repeated five times with different random seeds to account for variability and ensure statistical reliability.
As shown in Fig.~\ref{fig:main-results}, \emph{M$^2$-CMAB} consistently outperforms all realistic baselines across datasets and achieves performance closest to the oracle-aided upper bound.
Notably, our method outperforms the second-best baseline by 6.79\%, 13.08\%, and 14.18\% under the Restricted, Normal, and Generous budget regimes, respectively, on the \textsc{COMPOSITE} trace. 
These results underscore the effectiveness of the proposed representation and scheduling design in \emph{M$^2$-CMAB} for highly heterogeneous and complex MLLM inference workloads.

Moreover, as the budget increases, \emph{M$^2$-CMAB} achieves greater average improvements over the baselines, indicating that the algorithm can more effectively leverage additional scheduling flexibility to optimize performance. 
In some settings, the performance gap between \emph{M$^2$-CMAB} and the second-best baseline becomes smaller, particularly under highly constrained budgets (e.g., CoQA and GSM8K under the Restricted setting) or when task rewards are relatively uniform across actions (e.g., GSM8K). The distributions of the datasets can be found in Appendix~\ref{app:impl}.
In these cases, the inherently limited room for task inference scheduling naturally reduces the potential performance margin across different decision policies. 
Notably, even under such challenging regimes, \emph{M$^2$-CMAB} remains very close to the Optimal oracle, with a gap below 1.2\%, thereby demonstrating the robustness of the proposed approach in extreme and resource-constrained deployment scenarios.


\textbf{Hyperparameter Sensitivity (Answer for Q2). }
We also conduct experiments to examine the effect of varying the initial phase ratio in Fig.~\ref{fig:robustness}.
Specifically, we consider initial phase ratios of 2.5\%, 5\%, and 10\% of the total rounds. 
Overall, \emph{M$^2$-CMAB} maintains stable and robust performance across a reasonable range of this hyperparameter, with a slight decrease in average inference reward as the initial phase ratio increases, particularly under the Restricted budget condition.
This behavior can be attributed to the role of the initial phase: while allocating more rounds to this phase improves the accuracy of estimating the effective range of the constraint-control variables $\bm{\lambda}$, it simultaneously reduces the number of rounds available for subsequent exploration and exploitation.
This trade-off suggests that the length of the initial phase should be chosen in accordance with the underlying cost distribution and budget regime, rather than being set excessively large.

\textbf{Ablation Study (Answer for Q3). } 
To evaluate the contribution of each component in \emph{M$^2$-CMAB}, we perform ablation experiments on six datasets under three budget regimes. 
Specifically, we replace each key adapter-based predictor with a random predictor while keeping the other components unchanged.  
As shown in Fig.~\ref{fig:ablation}, removing any component results in a performance drop relative to the complete version of our framework. 
In particular, removing the reward adapter from \emph{M$^2$-CMAB} leads to a more substantial  performance degradation compared to the other adapters.
This is likely because the Money and Latency adapters primarily function as predictors for budget constraints, and inaccuracies in one can be partially compensated by the other. However, accurate predictions from the Reward adapter are essential for effective scheduling decisions in \emph{M$^2$-CMAB}.


\section{Conclusion}\label{sec:conclusion}
In this work, we address two fundamental challenges in MLLM inference scheduling: efficient task representation under heterogeneous modalities and effective resource utilization in online, budget-constrained settings. To this end, we propose \emph{M$^2$-CMAB}, an online scheduling framework for resource-constrained MLLM inference task.
At its core, \emph{M$^2$-CMAB} employs a CLS-attentive multi-adapter \emph{Predictor} for lightweight reward and cost estimation. A budget-aware \emph{Constrainer} adjusts the relative importance of resource consumptions across rounds, enabling adaptive constraint handling. Building on these components, a two-phase \emph{Scheduler} balances exploration and exploitation to ensure stable performance under diverse resource budgets.
Future work will focus on establishing online regret guarantees for MLLM-based estimators used in the \emph{Predictor}, and on exploring more lightweight and finer-grained representations of tasks and execution backends to further improve robustness and efficiency in large-scale MLLM deployments.

%% file: appendix.tex
\section{Table of Main Notations}\label{appendix: Major Notations}

Table~\ref{Table:Major Notations} provides the definitions of the main notations and symbols used throughout the paper to facilitate understanding and reference.
\begin{table}[!h]
\centering
\caption{Main notations and definitions.}
\label{Table:Major Notations}
\renewcommand{\arraystretch}{1.2}
\rowcolors{2}{white}{gray!20}
\begin{tabular}{p{3cm}<{\centering} m{\linewidth-3.9cm}}
\toprule
\multicolumn{1}{c}{\textbf{Notation}} &
\multicolumn{1}{c}{\textbf{Definition}} \\
\midrule
$t \ /\ T$ 
& Index of the online scheduling round and the total number of rounds. \\

$\Omega$ 
& Set of supported input modalities (e.g., text, vision, audio). \\

$\bm{x}^t = \{ \bm{x}^t_\omega \}_{\omega\in\Omega}$ 
& Multi-modal context of the task arriving at round $t$, consisting of modality-specific token embeddings. \\

$a^t \in \mathcal{A}$ 
& Action selected at round $t$, corresponding to an MLLM inference configuration. \\

$A$ 
& Cardinality of the action space. \\

$r_a^t$ 
& Observed reward of executing action $a$ under context $\bm{x}^t$ at round $t$. \\

$\bm{\varphi}_a^t \in \mathbb{R}^C_+$ 
& Observed resource consumption (cost) vector of action $a$ at round $t$ over $C$ constraint dimensions. \\

$\bm{\Phi} \in \mathbb{R}^C_+$ 
& Long-term budget vector across all resource dimensions. \\

$\mathcal{R}(a,\bm{x})$ 
& Unknown expected reward function induced by action $a$ and context $\bm{x}$. \\

$\mathcal{\bm{C}}(a,\bm{x})$ 
& Unknown expected cost function mapping an action-context pair to a $C$-dimensional resource consumption vector. \\

$\hat r_a^t = \hat{\mathcal{R}}(a,\bm{x}^t;\Theta_r^t)$ 
& Predicted reward of action $a$ at round $t$ given by the adapter-based predictor. \\

$\hat{\bm{\varphi}}_a^t = \hat{\bm{\mathcal{C}}}(a,\bm{x}^t;\bm{\Theta}_\varphi^t)$ 
& A collection of $C$ unknown cost functions, where each 
$\mathcal{C}(a,\bm{x};\bm{\Theta}_\varphi^{t,(c)})$ maps an action-context pair to the expected 
consumption of the $c$-th resource dimension. \\

$f_{\bm{\vartheta}}(\cdot)$ 
& Frozen MLLM backbone with parameter $\bm{\vartheta}$ used to extract task-level multi-modal representations. \\

$\bm{z}_x^t$ 
& Task-level hidden representation obtained from CLS-attentive pooling over the frozen MLLM backbone at round $t$. \\

$\bm{z}_a$ 
& Embedding representation of action $a$. \\

$\bm{\Theta}_r^t$
& Parameter set of the reward predictor at round $t$, consisting of the
frozen MLLM backbone parameters $\bm{\vartheta}$ and the trainable reward
adapter parameters $\bm{\theta}_r^t$. \\

$\bm{\Theta}_{\varphi}^{t}$
& Parameter set of the cost predictors at round $t$, given by
$\{\bm{\Theta}_{\varphi}^{t,(c)}\}_{c=1}^C$, where
$\bm{\Theta}_{\varphi}^{t,(c)}=\{\bm{\vartheta},\bm{\theta}_{\varphi}^{t,(c)}\}$
denotes the parameters of the $c$-th cost adapter. \\

$\bm{\lambda}^t \in \mathbb{R}^C_+$ 
& Vector of dual variables (Lagrange multipliers) associated with long-term  constraints at round $t$. \\

$\mathcal{V}$ 
& Feasible domain of the dual variables with $l_1$-radius $\Lambda$. \\

$\Lambda$ 
& Radius of the dual feasible set, determined in the initial phase to control budget violation. \\

$T_0$ 
& Number of exploration rounds per action in the initial phase of M$^2$-CMAB. \\

$\mathcal{S}_t(a)$ 
& Predicted per-round Lagrangian score of action $a$ at round $t$, combining reward and penalized cost. \\

$\mathcal{P}_t(a)$ 
& Sampling probability of selecting action $a$ at round $t$ in the exploration–exploitation phase \\

$\mathrm{Reg}(T)$ 
& Regret of M$^2$-CMAB over $T$ rounds, defined as $T\cdot \mathrm{OPT} - \mathbb{E}\!\left[\sum_{t=1}^T r_{a^t}^t\right]$. \\

$\mathrm{Reg}^r(T),\ \mathrm{Reg}^c(T)$ 
& Cumulative estimation regrets of the reward predictor and cost predictors over $T$ rounds. \\

$\Phi_{\min} = \|\bm{\Phi}^{-1}\|_{\infty}$ 
& Minimum normalized budget parameter used in regret analysis and feasibility conditions. \\

\bottomrule
\end{tabular}
\end{table}

\section{Details of Adapter-Based Predictors}
\label{appendix:adapter_details}

\paragraph{Context-Aware Representation Extraction.}
Given an action $a \in \mathcal{A}$, we consider the multi-modal context input
$\bm{x}^t = \{ \bm{x}^t_\omega \}_{\omega \in \Omega}$ together with the action
embedding $\bm{z}_{a}$ to predict the corresponding reward and cost.
To capture the intrinsic semantic complexity of the task, we leverage a
pre-trained multi-modal large language model (MLLM) backbone
$f_{\bm{\vartheta}}(\cdot)$ with frozen parameters $\bm{\vartheta}$, and
extract a task-level representation $\bm{z}_{x}^{t}$ from the multi-modal
context.

Specifically, an explicit \texttt{[CLS]} token is prepended to the multi-modal
token sequence $\bm{x}^t$ as a global semantic anchor. From the selected output
layer of the frozen backbone, we extract the multi-head self-attention tensor
$\bm{\mathrm{I}}^{t} \in \mathbb{R}^{H \times L \times L}$ and the hidden-state
matrix $\bm{\mathrm{H}}^{t} \in \mathbb{R}^{L \times d_{\mathrm{hid}}}$, where
$H$ denotes the number of attention heads, $L$ the token sequence length, and
$d_{\mathrm{hid}}$ the hidden-state dimensionality.

For each attention head $h \in \{1,\ldots,H\}$, we extract the attention weights
from the \texttt{[CLS]} token to all tokens as
\begin{equation}
\bm{\alpha}_h^t
= \bm{\mathrm{I}}_{h}^{t}[\mathrm{CLS},:] \in \mathbb{R}^{L}.
\end{equation}
Using these weights, we perform attention-based pooling over the hidden states
to obtain a head-wise semantic summary. To avoid introducing additional
trainable parameters, the pooled representations across all heads are aggregated
by simple averaging, yielding a compact, parameter-free context embedding
\begin{equation}
\bm{z}_{\bm{x}}^t
= \frac{1}{H} \sum_{h=1}^{H} \frac{1}{L}
\sum_{l=1}^{L} \alpha_{h,l}^t \, \bm{h}_l^t,
\end{equation}
where $\bm{h}_l^t$ denotes the $l$-th hidden-state vector from
$\bm{\mathrm{H}}^{t}$. The resulting representation $\bm{z}_{\bm{x}}^t$ is used
as the contextual feature fed into the bandit algorithm.

\paragraph{Backbone Model.}
The task backbone $f_{\bm{\vartheta}}(\cdot)$ is instantiated using
\emph{Qwen3-VL-2B-Instruct}~\footnote{\url{https://huggingface.co/Qwen/Qwen3-VL-2B-Instruct}},
a lightweight yet powerful multi-modal large language model capable of jointly
processing text and visual inputs. Its parameters remain frozen throughout
training, serving as a shared semantic encoder across all tasks and constraints.

\paragraph{Action Embedding.}
The embedding $\bm{z}_{a}$ of action $a$ is obtained using a dedicated language
embedding model, ensuring semantic alignment between the action description and
the contextual representation. In our implementation, we employ
\emph{Qwen3-Embedding-0.6B}~\footnote{\url{https://huggingface.co/Qwen/Qwen3-Embedding-0.6B}}
to encode each candidate action into a fixed-dimensional vector, which is then
concatenated with $\bm{z}_{\bm{x}}^t$ for downstream prediction.

\paragraph{Adapter-Based Reward and Cost Prediction.}
Conditioned on both the context embedding $\bm{z}_{\bm{x}}^t$ and the action
embedding $\bm{z}_{a}$, prediction is performed via lightweight task-specific
adapters. Concretely, we concatenate the two embeddings and feed them into a set
of adapters $\mathcal{F}_{\bm{\theta}}(\bm{z}_{a} || \bm{z}_{\bm{x}}^t)$,
parameterized by $\bm{\theta}_r$ for reward prediction and
$\{\bm{\theta}_{\varphi}^{(c)}\}_{c=1}^{C}$ for per-dimension cost prediction.

Each adapter is instantiated as a lightweight multi-layer perceptron (MLP),
whose detailed architecture is provided in the experimental section.
Importantly, all adapters are trained while keeping the backbone parameters
$\bm{\vartheta}$ frozen, ensuring stable representation learning and
efficient online updates.

Formally, the reward predictor is defined as
\begin{equation}
\hat{r}_a^t = \hat{\mathcal{R}}(a, \bm{x}^t ; \bm{\Theta}_r^t),
\qquad
\bm{\Theta}_r^t = \{\bm{\vartheta}, \bm{\theta}_r^t\},
\end{equation}
and for each budget-constrained resource dimension $c \in \{1,\ldots,C\}$, the
corresponding cost predictor is given by
\begin{equation}
\hat{\varphi}_a^{t,(c)} =
\hat{\mathcal{C}}(a, \bm{x}^t ; \bm{\Theta}_{\varphi}^{t,(c)}),
\qquad
\bm{\Theta}_{\varphi}^{t,(c)} = \{\bm{\vartheta}, \bm{\theta}_{\varphi}^{t,(c)}\}.
\end{equation}
The resulting vector $\hat{\bm{\varphi}}_a^t$ characterizes the estimated
per-round resource consumption associated with action $a$.

\paragraph{Training Objective.}
Both reward and cost adapters are trained using a unified regularized
least-squares objective. Let $\mathcal{F}_{\bm{\theta}}$ denote either a reward
or a cost predictor, and let $y^\tau$ be the corresponding supervision signal,
\emph{i.e.}, reward $r^\tau$ or cost $\varphi^{\tau,(c)}$. The adapter parameters
are updated by solving
\begin{equation}
\min_{\bm{\theta}}
\; \mathcal{J}(\bm{\theta})
= \frac{1}{2|\Upsilon^t|}
\sum_{\tau < t} \big(\mathcal{F}_{\bm{\theta}} - y^\tau\big)^2
+ \frac{\lambda}{2} \| \bm{\theta} - \bm{\theta}^0 \|_2^2,
\end{equation}
where
$\Upsilon^t = \{(r^\tau, \bm{\varphi}^\tau, a^\tau, \bm{x}^\tau)
\mid \tau < t, \tau \in \mathbb{N}_+\}$ denotes the historical observation set,
$\bm{\theta}^0$ is the initialization of adapter parameters, and $\lambda > 0$
controls the regularization strength.

\begin{algorithm}[t]
\caption{Initial Phase of M$^2$-CMAB for Estimating Dual Radius $\Lambda$}
\label{alg: Initial phase}
\textbf{Input:}
Total rounds $T$; exploration length $T_0$;
action set $\mathcal{A}$ with $|\mathcal{A}|=A$. \\
\textbf{Output:}
Estimated dual radius $\Lambda$;
historical dataset $\Upsilon^{(A+1)T_0}$.

\begin{algorithmic}[1]
\STATE Initialize historical dataset $\Upsilon^0 \leftarrow \emptyset$.

\textbf{// Stage I //}
\STATE Initialize action counters $n_a \leftarrow T_0$, for all $a \in \mathcal{A}$.
\STATE Initialize historical dataset $\Upsilon^0 \leftarrow \emptyset$.
\FOR{$t = 1$ \textbf{to} $A \cdot T_0$}
    \STATE Observe multi-modal context
    $\bm{x}^t = \{ \bm{x}^t_\omega \}_{\omega \in \Omega}$.
    \STATE Randomly sample an action $a^t$ from
    $\{ a \in \mathcal{A} : n_a > 0 \}$.
    \STATE Execute action $a^t$ and observe reward $r_{a}^t$ and cost vector $\bm{\varphi}^t$ for all budgets.
    \STATE Update counter $n_{a^t} \leftarrow n_{a^t} - 1$.
    \STATE Update dataset
    $\Upsilon^t \leftarrow \Upsilon^{t-1}
    \cup \{ (r^t, \bm{\varphi}^t, a^t, \bm{x}^t) \}$.
\ENDFOR

\textbf{// Stage II //}

\FOR{$t = AT_0 + 1$ \textbf{to} $(A+1)T_0$}
    \STATE Select an exploratory action $a^t \in \mathcal{A}$.
    \STATE Observe multi-modal context $\bm{x}^t$.
    \STATE Observe reward $r_{a}^t$ and cost vector $\bm{\varphi}^t$ for all budgets.
    \STATE Update dataset
    $\Upsilon^t \leftarrow \Upsilon^{t-1}
    \cup \{ (r^t, \bm{\varphi}^t, a^t, \bm{x}^t) \}$.
    \STATE Compute predicted rewards $\hat r_a^t = \hat{\mathcal{R}}\left(a,\bm{x}^t \, ;\, \bm{\Theta}^{t}_r \right)$
for all $a \in \mathcal{A}\, / \,a^t$.
    \FOR{$c = 0$ \textbf{to} $C-1$}
    \STATE Compute predicted cost  $\hat{\varphi}^{t,(c)}_{a} 
    = \hat{\mathcal{C}}\!\left(a,\bm{x}^t\, ;\, \bm{\Theta}^{t,{(c)}}_\varphi \right)$
for all $a \in \mathcal{A}\, / \,a^t$.
\ENDFOR
\ENDFOR

\STATE Estimate $\hat{\mathrm{OPT}}(T_0)$ by solving the linear program
defined in Eq.~\eqref{eq:linear_programming}.

\STATE Set the dual radius $\Lambda=\frac{T}{\Phi_\text{min}} \left(\hat{\text{OPT}}\left(T_0\right)+\mathcal{M}\left(T_0\right)\right)$, where $\Phi_\text{min} = \|\bm{\Phi}^{-1}\|_\infty$ and $\mathcal{M}\left(T_0\right)$ is defined in Eq.~\eqref{EQ:M(T_0)}.

\end{algorithmic}
\end{algorithm}

\section{Initial Phase of M$^2$-CMAB}
\label{appedix: initial phase}
This appendix describes the initial phase of \emph{M$^2$-CMAB}, which is designed
to determine the radius $\Lambda$ of the dual feasible set
$\mathcal{V} = \{ \bm{\lambda} \in \mathbb{R}^C_+ : \| \bm{\lambda} \|_1 \le \Lambda \}$.
The purpose of this phase is to ensure sufficient exploration of the action
space and to obtain a reliable estimate of the optimal value required for
subsequent dual-based exploration--exploitation.

The initial phase consists of two sequential stages.
First, each action $a \in \mathcal{A}$ is executed for $T_0$ rounds to collect
context--reward observations, which are used to fit a regression model for
estimating the unknown expected reward function $\mathcal{R}(a,\bm{x})$ and reward functions $\bm{\mathcal{C}}(a,\bm{x})$.
Second, an additional exploration stage is conducted to refine the reward
estimator and to compute an empirical estimate of the optimal value
$\hat{\mathrm{OPT}}(T_0)$, based on which the dual radius $\Lambda$ is
determined.

Throughout this phase, only observed rewards are used for estimation, and no
dual variables are updated. The collected historical data serve as a warm-start
for both the predictor initialization and the dual-constrained optimization in
the main phase.

\begin{algorithm}[t]
\caption{The M$^2$-CMAB Algorithm}
\label{alg:The M$^2$-CMAB Algorithm}
\begin{algorithmic}[1]
\REQUIRE Total rounds $T$; Exploration rounds $T_0$.

\STATE Determine the dual radius $\Lambda$ via the initial phase
(Alg.~\ref{alg: Initial phase}) and obtain the historical set
$\Upsilon^{(A+1)\cdot T_0}$;
\STATE Initialize the dual variable $\bm{\lambda}^1 \in \mathcal{V}$;
\FOR{$t = (A+1)\cdot T_0 + 1, \ldots, T$}
    \STATE Observe the context of the $t$-th task $\bm{x}^t$;

    \STATE \textbf{// Step~\textcircled{1}: Generating Action Scores //}
    \STATE Use the reward and cost predictors to estimate the expected
    reward $\hat r_a^t$ and cost vector $\hat{\bm{\varphi}}_a^t$
    for all $a\in\mathcal{A}$ based on $\bm{x}^t$ and action embeddings;

    \STATE Compute the per-round Lagrangian score
    $\mathcal{S}^t(a)$ for each action $a\in\mathcal{A}$
    using the current dual variable $\bm{\lambda}^t$;

    \STATE \textbf{// Step~\textcircled{2}: Selecting the Optimal Action //}
    \STATE Construct the action sampling distribution
    $\mathcal{P}^t(\cdot)$ according to the scores
    $\{\mathcal{S}^t(a)\}_{a\in\mathcal{A}}$;
    \STATE Sample the scheduling action $a^t \sim \mathcal{P}^t$;

    \STATE \textbf{// Step~\textcircled{3}: Updating Penalties //}
    \STATE Observe the realized reward $r^t$ and cost vector
    $\bm{\varphi}^t$, and update the historical set
    $\Upsilon^t \leftarrow \Upsilon^{t-1} \cup
    (r^t, \bm{\varphi}^t, a^t, \bm{x}^t)$;
    \STATE Update the reward and cost adapter parameters using
$\Upsilon^t$ according to the prescribed update rules;
    \STATE Update the dual variable $\bm{\lambda}^{t+1}$ via
    OMD method;
\ENDFOR
\end{algorithmic}
\end{algorithm}

\section{Regret Analysis Details}\label{appendix: Regret Analysis Details}
This appendix establishes the regret guarantees of M$^2$-CMAB shown in Alg.~\ref{alg:The M$^2$-CMAB Algorithm} in Theorem~\ref{theorem: regret}.
We begin by specifying the stochastic data-generating process and the class of
admissible reward and cost functions.
We then define the optimal static benchmark policy and its associated value,
which serves as the comparator in the regret definition.
Under these preliminaries, the regret analysis proceeds by bounding the error
introduced by function estimation and dual-based decision making.

\begin{proof}
We first formalize the stochastic environment through a realizability assumption
on the underlying reward and cost functions.

\begin{assumption}[Realizability of Reward and Cost Functions]
\label{assump:realizability_m2cmab}
There exist some optimal regression functions $\hat{\mathcal{R}}^*$ and $\hat{\bm{\mathcal{C}}^*}$ characterizing the expected reward and cost distributions, respectively, such that for
any round and any action $a \in \mathcal{A}$,
\[
\mathbb{E}\!\left[r_a \mid \bm{x} \right] = \hat{\mathcal{R}}^*(a,\bm{x}),
\qquad
\mathbb{E}\!\left[\bm{\varphi}_a \mid \bm{x} \right]
= \hat{\bm{\mathcal{C}}^*}(a,\bm{x}),
\]
where $\bm{x}=\{\bm{x}_\omega\}_{\omega\in\Omega}$ is the observed multi-modal
context.
\end{assumption}

Under Assumption~\ref{assump:realizability_m2cmab}, the stochastic reward and cost processes admit
well-defined expected functions induced by the action--context pair.
This allows us to characterize the performance of an online policy
through comparison with an optimal static benchmark that has full
knowledge of these expectations.

\begin{definition}[Optimal Static Benchmark Value]
\label{def:OPT}
Based on the realizable expected reward and cost functions in
Assumption~\ref{assump:realizability_m2cmab}, let $\pi^*:\mathcal{X}\rightarrow P_{\mathcal{A}}$ be a static randomized policy that maps each context $\bm{x}\in\mathcal{X}$ to a probability distribution over the action set $\mathcal{A}$ and $\mathbbm{1}$ represent the all-ones vector. 
Then, $\mathrm{OPT}$ is defined as
\begin{equation} 
\begin{aligned}  
 &\text{OPT} \triangleq \max_{\pi^*}\ \mathbb{E}_{\bm{x} \sim \mathcal{X}}\left[\sum_{a \in\mathcal{A}} \pi^*( \bm{x})\, \hat{\mathcal{R}}^*(a, \bm{x})\right], \\ 
 &\text { s.t. }\mathbb{E}_{\bm{x} \sim \mathcal{X}}\left[\sum_{a \in\mathcal{A}} \pi^*( \bm{x}) \, \frac{\hat{\bm{\mathcal{C}}^*}(a, \bm{x})}{\bm{\Phi}}\right] \leq \frac{\mathbbm{1}}{T}.
\end{aligned} 
\label{eq:static_policy} 
\end{equation}
\end{definition}


The benchmark value $\mathrm{OPT}$ characterizes the maximum achievable
expected reward under the long-term budget constraints in hindsight.
We now use this benchmark to formally define the regret of the
M$^2$-CMAB algorithm.

\begin{definition}[Cumulative Regret of the Online Algorithm]
\label{def:regret}

Given the optimal static benchmark value $\mathrm{OPT}$ defined in
Definition~\ref{def:OPT}, the regret of M$^2$-CMAB over $T$ rounds is defined as
\begin{equation}
\mathrm{Reg}(T)
\;\triangleq\;
T \cdot \mathrm{OPT}
-
\mathbb{E}\!\left[
\sum_{t=1}^T r_{a}^{t}
\right],
\end{equation}
where $T \cdot \mathrm{OPT}$ upper bounds the optimal cumulative reward achievable
in online scenarios~\cite{Agrawal2016}.
\end{definition}

To analyze $\mathrm{Reg}(T)$, it remains to quantify the error introduced
by learning-based estimation of the reward and cost functions.
We next formalize this error through squared-loss estimation regret.

\begin{definition}[Squared-Loss Estimation Regret]
\label{def:squared_loss_regret}
The squared-loss estimation regret of the reward predictor
$\hat{\mathcal{R}}(\cdot,\cdot)$ over $T$ rounds is defined as
\begin{equation}
\mathrm{Reg}^r(T)
\;\triangleq\;
\sum_{t=1}^T
\Big(
\hat{\mathcal{R}}(a^t,\bm{x}^t;\bm{\Theta}_r^t)
-
\hat{\mathcal{R}}^*(a^t,\bm{x}^t)
\Big)^2.
\end{equation}
Similarly, the squared-loss estimation regret of the cost predictors
$\hat{\bm{\mathcal{C}}}(\cdot,\cdot)$ is defined as
\begin{equation}
\mathrm{Reg}^c(T)
\;\triangleq\;
\sum_{t=1}^T
\Big\|
\hat{\bm{\mathcal{C}}}(a^t,\bm{x}^t;\bm{\Theta}_{\varphi}^t)
-
\hat{\bm{\mathcal{C}}^*}(a^t,\bm{x}^t)
\Big\|_2^2.
\end{equation}

\end{definition}

With the regret benchmark and estimation error measures in place,
we are ready to analyze the regret incurred during the
exploration--exploitation (main) phase of M$^2$-CMAB.
The following lemma characterizes the regret contribution of this phase
when the dual variables are updated within a bounded feasible set.

\begin{lemma}[Regret Bound of SquareCBwK, Theorem~A.1~\cite{DBLP:conf/aistats/HanZWXZ23}]
\label{lem:oracle_regret_m2cmab}
Consider the exploration-exploitation execution of M$^2$-CMAB following SquareCBwK algorithm~\cite{DBLP:conf/aistats/HanZWXZ23}, where the dual variables are updated over the feasible set
\[
\mathcal{V}
=
\big\{
\bm{\lambda}\in\mathbb{R}^C_+ :
\|\bm{\lambda}\|_1 \le \Lambda
\big\},
\]
with a fixed radius $\Lambda$ determined in the initial phase.
Then the regret incurred during the exploration-exploitation admits the bound
\begin{equation}
\mathrm{Reg}(T)
\;\lesssim\;
(\Lambda+1)
\sqrt{
A T
\Big(
\mathrm{Reg}^r(T)
+ \mathrm{Reg}^c(T)
+ \log(C\, T)
\Big)
}.
\end{equation}
\end{lemma}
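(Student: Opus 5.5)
The plan is to reduce the lemma to the standard primal--dual decomposition behind SquareCBwK. The exploration--exploitation phase is viewed as a game between the scheduler, which picks the distribution $\mathcal{P}^t$ over $\mathcal{A}$, and the Constrainer, which picks $\bm{\lambda}^t\in\mathcal{V}$. Let $\tau\le T$ be the stopping round, i.e., the first round at which some budget in \eqref{EQ:c1} would be exhausted.

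\textbf{Decomposition.} Let $\pi^*$ be the optimal static policy of Definition~\ref{def:OPT}, and let $\mathcal{L}^{*,t}(p,\bm{\lambda})$ denote the true per-round Lagrangian at context $\bm{x}^t$ and distribution $p$, computed with $\hat{\mathcal{R}}^*$ and $\hat{\bm{\mathcal{C}}^*}$. Since $\pi^*$ satisfies the constraint of Definition~\ref{def:OPT} in expectation, $\mathbb{E}[\mathcal{L}^{*,t}(\pi^*,\bm{\lambda}^t)]\ge \mathrm{OPT}$ for every $\bm{\lambda}^t\ge 0$. This gives
\[
\tau\,\mathrm{OPT}-\sum_{t\le\tau} r^t \;\le\; \underbrace{\sum_{t\le\tau}\bigl(\mathcal{L}^{*,t}(\pi^*,\bm{\lambda}^t)-\mathcal{L}^{*,t}(\mathcal{P}^t,\bm{\lambda}^t)\bigr)}_{\text{(I) primal}}+\underbrace{\sum_{t\le\tau}\Bigl\langle \frac{\bm{\varphi}^t}{\bm{\Phi}}-\frac{\mathbbm{1}}{T},\bm{\lambda}^t\Bigr\rangle}_{\text{(II) dual}} +\text{(III) martingale noise}.
\]
The missing $(T-\tau)\,\mathrm{OPT}$ is handled as follows. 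If $\tau<T$, some coordinate $c$ has cumulative normalized consumption at least $1$. Comparing with the fixed dual point $\bm{\lambda}=\Lambda\bm{e}_c\in\mathcal{V}$ in the OMD regret of (II) yields a term of size at least $\Lambda(1-\tau/T)$. By the choice of $\Lambda$ (Lemma~\ref{lemma: OPT_Z}, i.e., $\Lambda\ge T\,\mathrm{OPT}/\Phi_{\min}$ up to the $\mathcal{M}(T_0)$ slack), this dominates $(T-\tau)\,\mathrm{OPT}$ and cancels it.

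\textbf{Bounding the three terms.}
\begin{itemize}
\item \emph{Term (II).} This is the OMD regret of the entropic update \eqref{eq:lambdaupdate} on the $\ell_1$-ball of radius $\Lambda$. The dual gradients are bounded by $O(1)$ after normalization, so with step size $\varrho_t\asymp 1/\sqrt{T}$ the term is $O(\Lambda\sqrt{T\log C})$.
\item \emph{Term (III).} Reward and cost realizations deviate from their conditional means. Azuma--Hoeffding plus a union bound over the $C$ coordinates and $T$ rounds gives $O((\Lambda+1)\sqrt{T\log(CT)})$, which is the source of the $\log(CT)$ term.
\item \emph{Term (I).} This is the SquareCB step. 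Scores $\mathcal{S}^t(a)$ are the predicted Lagrangians, and the sampling rule \eqref{eq: probability_action_a}--\eqref{eq: probability_action_b} is exactly inverse-gap weighting with parameter $\rho$. The predicted per-round Lagrangian has range $O(\Lambda+1)$, so the Foster--Rakhlin per-round inequality applies after rescaling:
\[
\mathbb{E}_{a\sim\pi^*}[\mathcal{L}^*]-\mathbb{E}_{a\sim\mathcal{P}^t}[\mathcal{L}^*]\le \frac{2A}{\rho}+\frac{\rho}{4}\,\mathbb{E}_{a\sim\mathcal{P}^t}\bigl[(\hat{\mathcal{L}}^t(a)-\mathcal{L}^{*,t}(a))^2\bigr].
\]
Expanding, $(\hat{\mathcal{L}}-\mathcal{L}^*)^2\le 2(\hat r-\hat{\mathcal{R}}^*)^2+2\|\bm{\lambda}^t\|_1^2\,\|(\hat{\bm{\varphi}}-\hat{\bm{\mathcal{C}}^*})/\bm{\Phi}\|_\infty^2$. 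Summing over $t$ and passing from the expectation over $\mathcal{P}^t$ to the realized action (another martingale step) bounds the sum by $O\bigl((\Lambda+1)^2(\mathrm{Reg}^r(T)+\mathrm{Reg}^c(T)+\log(CT))\bigr)$. Taking $\rho\asymp\sqrt{AT/(\mathrm{Reg}^r+\mathrm{Reg}^c+\log(CT))}/(\Lambda+1)$ gives term (I) $\lesssim(\Lambda+1)\sqrt{AT(\mathrm{Reg}^r+\mathrm{Reg}^c+\log(CT))}$.
\end{itemize}
Adding the three pieces gives the claimed bound.

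\textbf{Main obstacle.} The delicate step is the stopping-time argument: showing that the dual term with comparator $\Lambda\bm{e}_c$ absorbs $(T-\tau)\,\mathrm{OPT}$. This requires $\Lambda$ to be neither too small, which would fail to cancel the lost rounds, nor too large, which would inflate the $(\Lambda+1)$ factor. I would take from the initial-phase guarantee (via $\mathcal{M}(T_0)$ and the condition on $\Phi_{\min}$) that $T\,\mathrm{OPT}/\Phi_{\min}\le\Lambda\lesssim T\,\mathrm{OPT}/\Phi_{\min}+1$ with high probability, and treat this as an input. Failure of that event contributes only lower-order terms.
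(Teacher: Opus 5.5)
The paper gives no argument for this lemma; it imports it as Theorem~A.1 of \cite{DBLP:conf/aistats/HanZWXZ23}. Your sketch reconstructs the proof behind that citation, so the route is the cited one made explicit. Its ingredients are the Lagrangian decomposition, an inverse-gap-weighting step for the primal term (the rule \eqref{eq: probability_action_a}--\eqref{eq: probability_action_b} is exactly IGW with parameter $\rho$), entropic OMD on the $\ell_1$-ball for the dual term, and the stopping-time argument with comparator $\Lambda\bm{e}_c$.

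Making it explicit buys two things the bare citation hides:
\begin{itemize}
\item It shows the bound needs the hypothesis $\Lambda\ge T\,\mathrm{OPT}/\Phi_{\min}$, which the statement omits. Read for arbitrary $\Lambda$, the statement fails, e.g.\ for $\Lambda=0$: the policy then ignores costs and can exhaust a budget after a constant fraction of the rounds, giving linear regret even with perfect estimates. The hypothesis holds for the initial-phase radius only on the high-probability event of Lemma~\ref{lemma: OPT_Z}, and you correctly take it as an input.
\item Your Freedman-type step from $\mathbb{E}_{a\sim\mathcal{P}^t}$ to the played action is what matches the paper's $\mathrm{Reg}^r(T),\mathrm{Reg}^c(T)$. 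These measure squared error against $\hat{\mathcal{R}}^*,\hat{\bm{\mathcal{C}}^*}$ at played actions, unlike the oracle regret on realized outcomes used in SquareCB-type analyses.
\end{itemize}

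Three corrections are needed.

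\textbf{Sign of (II).} With $\mathcal{L}^{*,t}$ in reward form, which you need for $\mathbb{E}[\mathcal{L}^{*,t}(\pi^*,\bm{\lambda}^t)]\ge\mathrm{OPT}$, the leftover term is $\sum_{t\le\tau}\langle \mathbbm{1}/T-\bm{\varphi}^t/\bm{\Phi},\bm{\lambda}^t\rangle$. That is the negative of what you wrote; as written, (II) is not controlled by OMD. The OMD guarantee bounds the corrected term by $R_{\mathrm{OMD}}+\sum_{t\le\tau}\langle \mathbbm{1}/T-\bm{\varphi}^t/\bm{\Phi},\bm{\lambda}\rangle$ for every $\bm{\lambda}\in\mathcal{V}$. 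Choosing $\bm{\lambda}=\Lambda\bm{e}_c$ when $\tau<T$, and $\bm{\lambda}=0$ otherwise, then contributes $-\Lambda(1-\tau/T)$, as your prose intends.

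\textbf{Units in the absorption step.} With costs divided by $\bm{\Phi}$ as in your Lagrangian, cancelling $(T-\tau)\,\mathrm{OPT}$ requires $\Lambda\ge T\cdot\mathrm{OPT}$. This does not follow from $\Lambda\ge T\,\mathrm{OPT}/\Phi_{\min}$ when $\Phi_{\min}$ is a budget size exceeding $1$, as the condition $\Phi_{\min}>(A+2)T_0$ in Theorem~\ref{theorem: regret} presumes. The threshold $T\,\mathrm{OPT}/\Phi_{\min}$ belongs to the unnormalized Lagrangian $r-\langle\bm{\lambda},\bm{\varphi}-\Phi_{\min}\mathbbm{1}/T\rangle$ of the cited work, with costs in $[0,1]$ and budgets rescaled to a common value $\Phi_{\min}$. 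There the comparator yields $\Lambda\,\Phi_{\min}(1-\tau/T)$. Run the argument in those units; your $O(1)$ dual-gradient and $O(\Lambda+1)$ Lagrangian-range claims hold there too.

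\textbf{Tuning of $\rho$.} $\rho$ cannot be set from the realized $\mathrm{Reg}^r(T)+\mathrm{Reg}^c(T)$, since these depend on the actions drawn under $\rho$, which makes the choice circular. Set it from an a priori upper bound on the estimation regrets, as SquareCB-type analyses do with a known oracle regret bound, or use a doubling schedule.

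With these fixes the sketch is sound.
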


Lemma~\ref{lem:oracle_regret_m2cmab} characterizes the regret incurred during
the exploration--exploitation phase when the dual variables are updated within
a fixed feasible radius $\Lambda$.
To make this result applicable, it remains to show that the dual radius
$\Lambda$, estimated in the initial phase, is well controlled with high
probability.
To this end, we establish a concentration guarantee for the estimation of the
$\ell_1$-radius $\Lambda$.

\begin{lemma}[Accuracy of the Estimated Dual Radius, Lemma~4.1~\cite{DBLP:conf/aistats/HanZWXZ23}]   
\label{lemma: OPT_Z}
Denoting the optimal value of \eqref{eq:linear_programming} by $\hat{\text{OPT}}\left(T_0\right)$, and setting $\Lambda=\frac{T}{\Phi_{\mathrm{min}}}\left(\hat{\text{OPT}}\left(T_0\right)+M\left(T_0\right)\right)$, where $\Phi_{\mathrm{min}} = \|\bm{\Phi}_{\mathrm{min}}^{-1}\|_\infty$, we have with probability at least $1-O\left(1 / T^2\right)$,
\begin{equation}
    \frac{T \cdot \text{OPT}}{\Phi_{\mathrm{min}}} \leq \Lambda \leq\left(\frac{6 T \cdot \mathcal{M}\left(T_0\right)}{\Phi_{\mathrm{min}}}+1\right)\left(\frac{T \cdot \text{OPT}}{\Phi_{\mathrm{min}}}+1\right),
\end{equation}  
\end{lemma}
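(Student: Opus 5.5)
The plan is to follow the argument of Lemma~4.1 in~\cite{DBLP:conf/aistats/HanZWXZ23}. The idea is to sandwich $\hat{\mathrm{OPT}}(T_0)$ between perturbed versions of the population program in Definition~\ref{def:OPT}. The lower bound on $\Lambda$ will come from showing that the benchmark policy $\pi^*$ is feasible for the relaxed empirical program~\eqref{eq:linear_programming}. The upper bound will come from showing that the empirical maximizer, after a suitable scaling, is feasible for the population program.

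\textbf{Step 1 (good event).} I first define an event $\mathcal{G}$ of probability at least $1-O(1/T^2)$. On $\mathcal{G}$, every (fractional) decision rule $o$ over the held-out rounds $\mathcal{T}_0$ has empirical objective within $\mathcal{M}(T_0)$ of its population counterpart:
\[
\frac{1}{T_0}\sum_{t\in\mathcal{T}_0}\sum_{a} o^t_a\,\hat{\mathcal{R}}(a,\bm{x}^t;\bm{\Theta}^t_r)
\;\text{ versus }\;
\mathbb{E}_{\bm{x}}\Bigl[\sum_a \pi(\bm{x})\hat{\mathcal{R}}^*(a,\bm{x})\Bigr].
\]
The same holds for each of the $C$ normalized cost coordinates. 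The deviation splits into two pieces:
\begin{itemize}
\item the predictor error $\hat{\mathcal{R}}-\hat{\mathcal{R}}^*$, controlled by Cauchy--Schwarz in terms of the oracle errors $\mathcal{E}_r(T_0)$ and $\mathcal{E}_c(T_0)$ (Stage~I gives each action $T_0$ samples, which accounts for the factors $A$ and $d$);
\item the sampling error of $\bm{x}^t$, controlled by Hoeffding's inequality with a union bound over the $C$ constraints and the confidence level $1/T^2$, which yields the term $4\log(TC)/T_0$.
\end{itemize}
Summing the two pieces under the square root gives exactly $\mathcal{M}(T_0)$ in~\eqref{EQ:M(T_0)}.

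\textbf{Step 2 (lower bound).} On $\mathcal{G}$, I plug $o^t_a=\pi^*(\bm{x}^t)_a$ into~\eqref{eq:linear_programming}. Its empirical cost exceeds the population value $\mathbbm{1}/T$ by at most $\mathcal{M}(T_0)/\bm{\Phi}$, so it satisfies the $2\mathcal{M}(T_0)/\bm{\Phi}$ slack and is feasible. Its empirical reward is at least $\mathrm{OPT}-\mathcal{M}(T_0)$. Hence $\hat{\mathrm{OPT}}(T_0)+\mathcal{M}(T_0)\ge \mathrm{OPT}$, which gives $\Lambda\ge T\,\mathrm{OPT}/\Phi_{\min}$.

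\textbf{Step 3 (upper bound).} Let $\hat o$ be the empirical maximizer, and extend it to a policy on $\mathcal{X}$, for example through the predictor-based decision rule. On $\mathcal{G}$ its population cost is at most $\mathbbm{1}/T+3\mathcal{M}(T_0)/\bm{\Phi}$, and its population reward is at least $\hat{\mathrm{OPT}}(T_0)-\mathcal{M}(T_0)$.

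Next I use that $b\mapsto\mathrm{OPT}(b)$ is concave and nonnegative, which follows from mixing with a zero-cost or null action. Inflating the budget from $\mathbbm{1}/T$ to $(1+\delta)\mathbbm{1}/T$ with $\delta=3T\mathcal{M}(T_0)/\Phi_{\min}$ then gives
\[
\hat{\mathrm{OPT}}(T_0)-\mathcal{M}(T_0)\;\le\;\mathrm{OPT}\bigl((1+\delta)\mathbbm{1}/T\bigr)\;\le\;(1+\delta)\,\mathrm{OPT}.
\]
Rearranging, $\hat{\mathrm{OPT}}+\mathcal{M}\le (1+\delta)\mathrm{OPT}+2\mathcal{M}$. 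Multiplying by $T/\Phi_{\min}$ and absorbing $2T\mathcal{M}/\Phi_{\min}$ into the cross term $\tfrac{6T\mathcal{M}}{\Phi_{\min}}\cdot 1$ yields the stated product bound $\bigl(\tfrac{6T\mathcal{M}}{\Phi_{\min}}+1\bigr)\bigl(\tfrac{T\,\mathrm{OPT}}{\Phi_{\min}}+1\bigr)$.

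\textbf{Main obstacle.} I expect Step~3 to be the hardest, specifically converting the empirical LP solution, which is defined only on the $T_0$ sampled contexts, into a genuine policy. The concentration in Step~1 must hold uniformly over data-dependent $o$, not just for a fixed policy. My first attempt is to restrict to policies that are functions of the predicted reward and cost vectors, where LP optimal solutions are threshold or dual-price rules. I would then apply concentration uniformly over the dual price $\bm{\lambda}$ via a covering argument, with the resulting $\log$ factor absorbed into $\log(TC)$. The concavity/null-action step is the other delicate point. If no free action exists, I would instead use LP duality with the multiplier bound $\|\bm{\lambda}^*\|_1\le T\,\mathrm{OPT}/\Phi_{\min}$, which gives the same $(1+\delta)$ sensitivity.
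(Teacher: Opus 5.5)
The paper gives no argument of its own here; it imports Lemma~4.1 of~\cite{DBLP:conf/aistats/HanZWXZ23} with $B$ replaced by $\Phi_{\min}$. So the question is whether your sketch proves that lemma, and it does not yet.

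Your lower bound (Step~2) is fine. $\pi^*$ is a fixed, data-independent rule, so Hoeffding with a union bound over the reward and the $C$ cost coordinates, plus the Cauchy--Schwarz bound $\sqrt{A\,\mathcal{E}}$ on the predictor error, gives $\hat{\mathrm{OPT}}(T_0)\ge\mathrm{OPT}-\mathcal{M}(T_0)$ up to constants.

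The gap is in the upper bound (Steps~1 and~3). The uniform claim in Step~1 is false as written. A data-dependent $o$ has no population counterpart until you extend it off the sample, and no bound holds uniformly over such extensions. With a continuous context law the $T_0$ sampled contexts have measure zero, so a rule that agrees with $\hat o$ there and does anything elsewhere has a population value unrelated to its empirical value.

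Your repair does not close this, for two reasons. First, dual-price (argmax) rules are discontinuous in $\bm\lambda$: ties can carry positive mass and the LP optimum randomizes on them. An $\epsilon$-net in $\bm\lambda$ therefore does not by itself control the deviation between net points; you would need a VC or pseudo-dimension argument. Second, even that argument produces a complexity term of order $C\log(\cdot)$, times an $A$-dependent factor, under the square root. This cannot be absorbed into $\log(TC)$. At best Step~3 proves the lemma with a larger, $C$-dependent $\mathcal{M}(T_0)$.

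The missing idea is to upper-bound $\hat{\mathrm{OPT}}(T_0)$ by weak LP duality at a \emph{fixed} multiplier. Take the optimal dual $\bm\lambda^*$ of the population program in Definition~\ref{def:OPT}; it does not depend on the Stage~II contexts. Writing the constraint of~\eqref{eq:linear_programming} in unnormalized form, for every $\bm\lambda\succeq 0$,
\[
\hat{\mathrm{OPT}}(T_0)\le\frac{1}{T_0}\sum_{t\in\mathcal{T}_0}\max_{a\in\mathcal{A}}\Bigl[\hat{\mathcal{R}}(a,\bm{x}^t)-\bigl\langle\bm\lambda,\hat{\bm{\mathcal{C}}}(a,\bm{x}^t)\bigr\rangle\Bigr]+\Bigl\langle\bm\lambda,\frac{\bm\Phi}{T}+2\mathcal{M}(T_0)\mathbbm{1}\Bigr\rangle .
\]
At $\bm\lambda=\bm\lambda^*$ the summand is a single fixed function with range $O(1+\|\bm\lambda^*\|_1)$. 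Hoeffding and the $\sqrt{A\,\mathcal{E}}$ bound then apply with no uniformity at all. Population strong duality (Slater holds via the null action) turns the resulting expectation plus $\langle\bm\lambda^*,\bm\Phi/T\rangle$ into exactly $\mathrm{OPT}$. Hence $\hat{\mathrm{OPT}}(T_0)\le\mathrm{OPT}+O(\mathcal{M}(T_0))\,(1+\|\bm\lambda^*\|_1)$. Combined with $\|\bm\lambda^*\|_1\le T\,\mathrm{OPT}/\Phi_{\min}$, this gives the product bound directly, with no extension of $\hat o$ and no concavity step. It also explains why a bare $\log(TC)$ term suffices in $\mathcal{M}(T_0)$: the whole proof needs concentration for only $O(C)$ fixed functions.

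Your fallback for the case without a free action is circular. The bound $\|\bm\lambda^*\|_1\le T\,\mathrm{OPT}/\Phi_{\min}$ itself comes from $\max_a\bigl[\hat{\mathcal{R}}^*-\langle\bm\lambda^*,\hat{\bm{\mathcal{C}}}^*\rangle\bigr]\ge 0$, that is, from a zero-cost action. Both routes therefore rest on that assumption, which the lemma silently inherits from~\cite{DBLP:conf/aistats/HanZWXZ23}.
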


Lemma~\ref{lemma: OPT_Z} establishes a high-probability bound on the
$\ell_1$-radius $\Lambda$ estimated in the initial phase.
In particular, when the normalized budget parameter satisfies
$\Phi_{\min}=\mathrm{O}\!\left(T\cdot \mathcal{M}(T_0)\right)$,
the upper bound in Lemma~\ref{lemma: OPT_Z} implies that the dual radius
can be simplified as
\begin{equation}
\Lambda \;\lesssim\; \frac{T\cdot \mathrm{OPT}}{\Phi_{\min}} + 1,
\end{equation}
where the notation ``$\lesssim$'' denotes an inequality up to a universal
constant factor.
With this estimation accuracy of $\Lambda$, the online mirror descent (OMD)
algorithm used to update the dual variable $\bm{\lambda}$ achieves an
$\mathrm{O}(\sqrt{T})$ regret guarantee, as shown in Lemma~2.2
of~\cite{DBLP:conf/aistats/HanZWXZ23}.

Substituting the above estimate of $\Lambda$ into the regret bound of the
exploration--exploitation phase given in
Lemma~\ref{lem:oracle_regret_m2cmab}, we obtain the following
high-probability regret guarantee for the exploration--exploitation phase
of M$^2$-CMAB:
\begin{equation}
\operatorname{Reg}^{\mathrm{ee}}(T)
\;\lesssim\;
\left(
\frac{T\,\mathrm{OPT}}{\Phi_{\min}} + 1
\right)
\sqrt{
A\,T\!\left(
\operatorname{Reg}^r(T)
+
\operatorname{Reg}^c(T)
+
\log(C\,T)
\right)
}.
\end{equation}

\paragraph{Regret decomposition across phases.}
In the initial phase (Alg.~\ref{alg: Initial phase}), the algorithm runs for
$A\, T_0$ rounds to estimate a feasible dual radius.
During this phase, the expected per-round regret is upper bounded by the
optimal dual radius $\Lambda$.
By Lemma~\ref{lemma: OPT_Z}, with high probability
$\Lambda \lesssim \frac{T\cdot \mathrm{OPT}}{\Phi_{\min}}+1$,
which implies that the total regret incurred in the initial phase is at most
\begin{equation}
   \operatorname{Reg}^{\mathrm{ini}}(T)
\;\lesssim\; \left(\frac{T\cdot \mathrm{OPT}}{\Phi_{\min}}+1\right) A T_0 . 
\end{equation}

Combining the regret contributions from the initial phase and the
exploration--exploitation phase, we conclude that the total regret of
M$^2$-CMAB satisfies
\begin{equation}
\begin{split}
   \operatorname{Reg}(T) \,=\, \operatorname{Reg}^{\mathrm{ini}}(T) + \operatorname{Reg}^{\mathrm{ee}}(T) 
\;\lesssim\;
\left(\frac{T\,\mathrm{OPT}}{\Phi_{\min}}+1\right)
\Bigl(
A T_0
+
\sqrt{A\,T\!\left[
\operatorname{Reg}^r(T)
+
\operatorname{Reg}^c(T)
+
\log(C\,T)
\right]}
\Bigr). 
\end{split}
\end{equation}

Therefore, we obtain the regret guarantee stated in
Theorem~\ref{theorem: regret} for M$^2$-CMAB in
Algorithm~\ref{alg:The M$^2$-CMAB Algorithm}.
\end{proof}

\begin{table}[t]
\centering
\caption{Key hyperparameters used in the scheduling framework (placeholders).}
\label{tab:hyper}
\renewcommand{\arraystretch}{1.2}
\rowcolors{2}{white}{gray!20}
\begin{tabular}{p{3cm}<{\centering} m{\linewidth-6cm}}
\toprule
\textbf{Hyperparameters} & \multicolumn{1}{c}{\textbf{Description}}  \\
\midrule

$P_{\text{local}}$ & Average power consumption of the local device (W), 600 \\
$\kappa$ & Electricity price coefficient (cost per Joule), 2.06e-8 \\

\bottomrule
\end{tabular}
\end{table}

\section{Supplement for Experiments}\label{appendix: supply experiments}

\subsection{Distribution of Reward, Latency, and Monetary Cost Across Individual Datasets.}
\label{app:distribution}
Figure~\ref{fig:six_distrbution} illustrates the distribution of reward, latency, and monetary cost across MLLM inference backends for the five individual datasets as well as the composite dataset.

\begin{figure}[t]
\centering      
\includegraphics[width=\columnwidth*2/3]{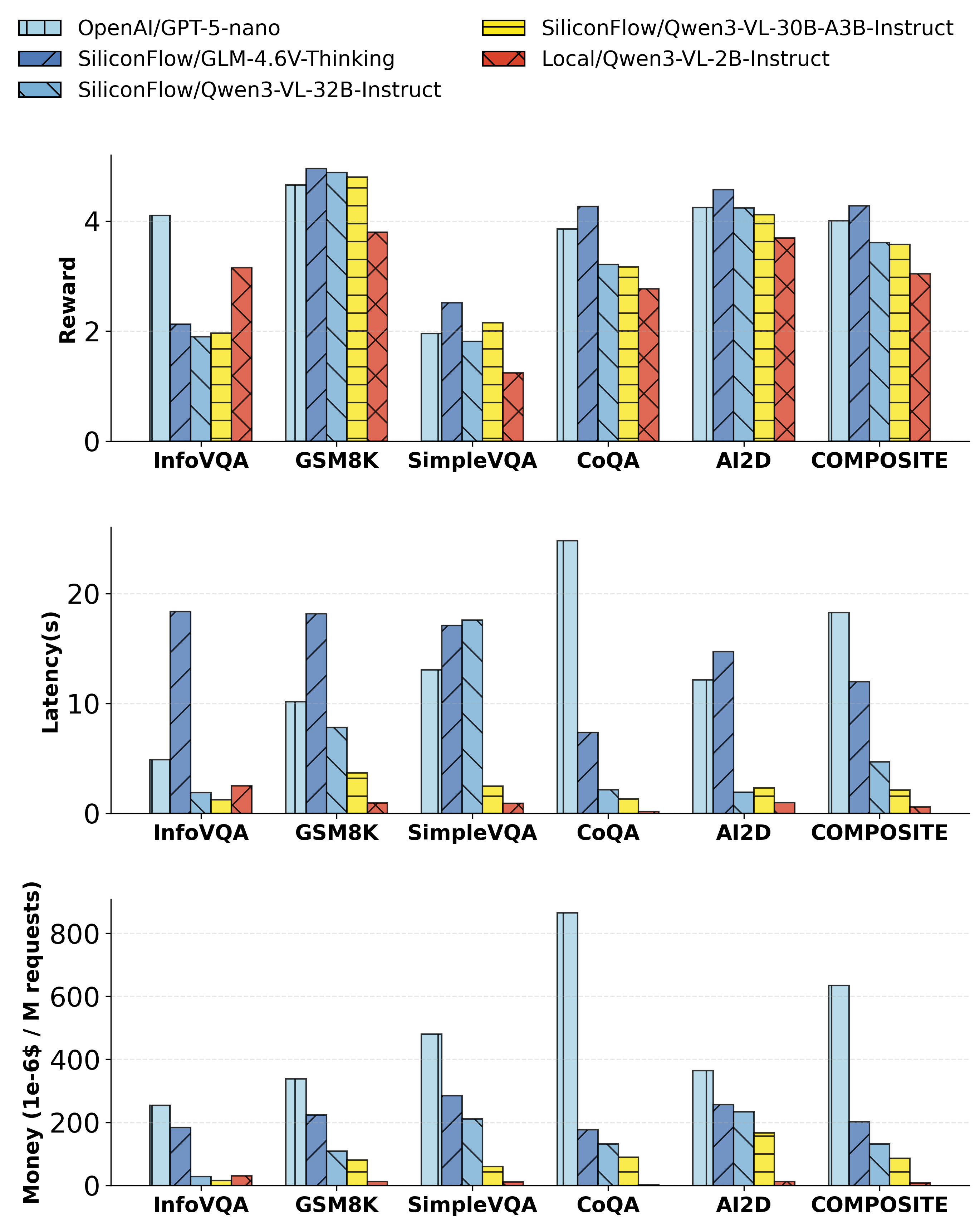}
\caption{Comparison of reward, latency, and monetary cost across MLLM inference backends on individual datasets and a mixed multi-modal task trace.}
\label{fig:six_distrbution}  
\vspace{-6mm}
\end{figure}



\subsection{Simulation Metrics and Reward Definitions}
\label{app:Metrics}

This appendix presents the formal mathematical definitions of the latency, monetary cost, and response quality (reward) metrics used throughout the experimental evaluation.

\paragraph{Local Execution.}
For local execution, inference latency is modeled as
\begin{equation}
\psi_{\text{Local}} =
\frac{T_{\text{in}} + T_{\text{out}}}{\text{FLOPS}_{\text{device}}},
\end{equation}
where $T_{\text{in}}$ and $T_{\text{out}}$ denote the numbers of input and output tokens, respectively, and $\text{FLOPS}_{\text{device}}$ is the effective compute throughput of the local device.
The corresponding monetary cost is defined as
\begin{equation}
\phi_{\text{Local}} =
\psi_{\text{Local}} \cdot P_{\text{local}} \cdot \kappa,
\end{equation}
where $P_{\text{local}}$ denotes the average power consumption of the device and $\kappa$ is the electricity price coefficient.

\paragraph{Cloud Execution.}
For cloud execution, inference latency $\psi_{\text{Cloud}}$ is modeled as the sum of network transmission delay and backend inference time, calibrated using empirical API traces.
The monetary cost of cloud execution is computed as
\begin{equation}
\phi_{\text{Cloud}} =
C_{\text{in}} + C_{\text{out}} + C_{\text{img}},
\end{equation}
where $C_{\text{in}}$ and $C_{\text{out}}$ denote provider-specific input and output token charges, and $C_{\text{img}}$ captures modality-dependent processing costs (e.g., image inputs).

\paragraph{Response Quality and Reward Definition.}
We define the response quality, denoted by $r \in [1,5]$, as the reward signal used to evaluate inference outcomes across different datasets.
Due to the heterogeneity of task formats and supervision levels, the reward is computed in a dataset-specific manner and normalized to ensure cross-dataset comparability. For standard generative QA tasks (e.g. CoQA), we compute the reward based on the similarity between the model-generated text and the reference answer. 
Specifically, we use the ROUGE-L~\cite{lin2004rouge} score and map it to an integer reward in $[1,5]$ according to Table~\ref{tab:reward-mapping}.
\begin{table}[t]
\centering
\caption{Mapping from ROUGE-L scores to discrete reward levels.}
\label{tab:reward-mapping}
\begin{tabular}{ll}
\toprule
\textbf{ROUGE-L} & \textbf{Reward}\\
\midrule
0
& 1 \\

(0, 0.15)
& 2\\

[0.15, 3)
& 3 \\

[0.3, 0.4)
& 4 \\

(0.4, 1]
& 5\\
\bottomrule
\end{tabular}
\end{table}

For multiple-choice tasks, where all candidate options (including the correct answer) are provided in the prompt (e.g. AI2D), we adopt exact-match evaluation. 
If the model’s selected answer matches the ground-truth option, a reward of 5 is assigned; otherwise, the reward is set to 1. 
This design ensures that the reward function is normalized and comparable across different task formats, facilitating consistent learning and scheduling within the M$^2$-CMAB framework.

\begin{table}[t]
\centering
\caption{Execution models used in the scheduling action space.}
\label{tab:exec-models}
\begin{tabular}{lll}
\toprule
\textbf{Category} & \textbf{Model} & \textbf{Provider / Deployment} \\
\midrule
Local
& Qwen3-VL-2B-Instruct
& On-device \\

Cloud
& GPT-5-nano
& OpenAI API \\

Cloud
& Qwen3-VL-32B-Instruct
& SiliconFlow API \\

Cloud
& Qwen3-VL-30B-A3B-Instruct
& On-device \\

Cloud
& GLM-4.6V-Thinking
& SiliconFlow API \\
\bottomrule
\end{tabular}
\end{table}

\subsection{Additional Implementation Details}
\label{app:impl}

We provide further details on baseline implementations, dataset preprocessing, and hardware specifications in this appendix to ensure full reproducibility. All experiments are conducted under identical workload traces and budget constraints.

Table~\ref{tab:hyper} summarizes the key hyperparameters involved in the scheduling and learning framework.
We report them here for completeness and reproducibility.
Unless otherwise specified, these hyperparameters are shared across all experiments.

\subsection{Execution Backend Configurations}
\label{app:exec-models}

Table~\ref{tab:exec-models} summarizes all execution backends considered in our action space, including their deployment location and provider.

\subsection{Prompt Used in Experiments}
\label{app:prompts}
\textbf{AI2D}
\begin{verbatim}
You are an assistant that answers questions based on a given image. Generate the 
answer in the format "Option x" (x is the correct option index/letter).
Image: {image}
Question: {question} 
Options: {options}
\end{verbatim}
\textbf{CoQA}
\begin{verbatim}
You are an assistant that answers questions based on a given story. Use only 
the information from the story to provide concise answers without adding extra 
content.
Backstory: {story}
Question: {question}
\end{verbatim}

\textbf{GSM8K}
\begin{verbatim}
You are a math expert. Please answer the given math problem. 
Generate the results strictly in the following format:  
"Analysis: Your Analysis Process. Answer: Your final answer to the question."
Question: {question}
\end{verbatim}

\textbf{InfoVQA \& SimpleVQA}
\begin{verbatim}
You are an assistant that answers questions based on a given image. 
Provide concise answers without adding any extra content. 
Image: {image}\n
Question: {question}
\end{verbatim}

%% file: ref.bib
@inproceedings{Wang2025,
  author       = {Jingyao Wang and
                  Siyu Zhao and
                  Wenwen Qiang and
                  Jiangmeng Li and
                  Changwen Zheng and
                  Fuchun Sun and
                  Hui Xiong},
  title        = {{Towards the Causal Complete Cause of Multi-Modal Representation Learning}},
  booktitle    = {Forty-second International Conference on Machine Learning (ICML 2025)}
                  ,
  publisher    = {PMLR},
  year         = {2025}
}

@inproceedings{Wu2025,
  author       = {Qilong Wu and
                  Yiyang Shao and
                  Jun Wang and
                  Xiaobo Sun},
  title        = {{Learning Optimal Multimodal Information Bottleneck Representations}},
  booktitle    = {Forty-second International Conference on Machine Learning (ICML 2025)}
                  ,
  publisher    = {PMLR},
  year         = {2025},

}

@inproceedings{Tu2025,
  author       = {Weijie Tu and
                  Weijian Deng and
                  Dylan Campbell and
                  Yu Yao and
                  Jiyang Zheng and
                  Tom Gedeon and
                  Tongliang Liu},
  title        = {Ranked from Within: Ranking Large Multimodal Models Without Labels},
  booktitle    = {Forty-second International Conference on Machine Learning (ICML 2025)}
                  ,
  publisher    = {PMLR},
  year         = {2025}
}

@inproceedings{Pei2025,
  author       = {Muleilan Pei and
                  Shaoshuai Shi and
                  Lu Zhang and
                  Peiliang Li and
                  Shaojie Shen},
  title        = {{GoIRL: Graph-Oriented Inverse Reinforcement Learning for Multimodal
                  Trajectory Prediction}},
  booktitle    = {Forty-second International Conference on Machine Learning (ICML 2025)}
                  ,
  publisher    = {PMLR},
  year         = {2025}
}

@inproceedings{Jia2025,
  author       = {Su Jia and
                  Peter I. Frazier and
                  Nathan Kallus},
  title        = {{Multi-Armed Bandits with Interference: Bridging Causal Inference and
                  Adversarial Bandits}},
  booktitle    = {Forty-second International Conference on Machine Learning (ICML 2025)}
                  ,
  publisher    = {PMLR},
  year         = {2025}
}

@inproceedings{Huang2025,
  author       = {Wenke Huang and
                  Jian Liang and
                  Zekun Shi and
                  Didi Zhu and
                  Guancheng Wan and
                  He Li and
                  Bo Du and
                  Dacheng Tao and
                  Mang Ye},
  title        = {{Learn from Downstream and Be Yourself in Multimodal Large Language
                  Models Fine-Tuning}},
  booktitle    = {Forty-second International Conference on Machine Learning (ICML 2025)}
                  ,
  publisher    = {PMLR},
  year         = {2025},
}

@inproceedings{Ma2025,
  author       = {Zehong Ma and
                  Shiliang Zhang and
                  Longhui Wei and
                  Qi Tian},
  title        = {{Efficient Multi-modal Long Context Learning for Training-free Adaptation}},
  booktitle    = {Forty-second International Conference on Machine Learning (ICML 2025)}
                  ,
  publisher    = {PMLR},
  year         = {2025},
}

@inproceedings{Ge2025,
  author       = {Chendi Ge and
                  Xin Wang and
                  Zeyang Zhang and
                  Hong Chen and
                  Jiapei Fan and
                  Longtao Huang and
                  Hui Xue and
                  Wenwu Zhu},
  title        = {Dynamic Mixture of Curriculum LoRA Experts for Continual Multimodal
                  Instruction Tuning},
  booktitle    = {Forty-second International Conference on Machine Learning (ICML 2025)}
                  ,
  publisher    = {PMLR},
  year         = {2025},
}

@inproceedings{Oh2025,
  author       = {Changdae Oh and
                  Zhen Fang and
                  Shawn Im and
                  Xuefeng Du and
                  Yixuan Li},
  title        = {{Understanding Multimodal LLMs Under Distribution Shifts: An Information-Theoretic
                  Approach}},
  booktitle    = {Forty-second International Conference on Machine Learning (ICML 2025)}
                  ,
  publisher    = {PMLR},
  year         = {2025}
}

@inproceedings{Chen2025,
  author       = {Mingcai Chen and
                  Baoming Zhang and
                  Zongbo Han and
                  Wenyu Jiang and
                  Yanmeng Wang and
                  Shuai Feng and
                  Yuntao Du and
                  Bingkun Bao},
  title        = {{Test-Time Selective Adaptation for Uni-Modal Distribution Shift in
                  Multi-Modal Data}},
  booktitle    = {Forty-second International Conference on Machine Learning (ICML 2025)}
                  ,
  publisher    = {PMLR},
  year         = {2025},
}

@inproceedings{Hao2025,
  author       = {Hao Fei and
                  Yuan Zhou and
                  Juncheng Li and
                  Xiangtai Li and
                  Qingshan Xu and
                  Bobo Li and
                  Shengqiong Wu and
                  Yaoting Wang and
                  Junbao Zhou and
                  Jiahao Meng and
                  Qingyu Shi and
                  Zhiyuan Zhou and
                  Liangtao Shi and
                  Minghe Gao and
                  Daoan Zhang and
                  Zhiqi Ge and
                  Siliang Tang and
                  Kaihang Pan and
                  Yaobo Ye and
                  Haobo Yuan and
                  Tao Zhang and
                  Weiming Wu and
                  Tianjie Ju and
                  Zixiang Meng and
                  Shilin Xu and
                  Liyu Jia and
                  Wentao Hu and
                  Meng Luo and
                  Jiebo Luo and
                  Tat{-}Seng Chua and
                  Shuicheng Yan and
                  Hanwang Zhang},
  title        = {{On Path to Multimodal Generalist: General-Level and General-Bench}},
  booktitle    = {Forty-second International Conference on Machine Learning (ICML 2025)}
                  ,
  publisher    = {PMLR},
  year         = {2025},
}

@misc{li2024llava_next_interleave,
  title         = {LLaVA-NeXT-Interleave: Tackling Multi-image, Video, and 3D in Large Multimodal Models},
  author        = {Li, Feng and Zhang, Renrui and Zhang, Hao and Zhang, Yuanhan and Li, Bo and Li, Wei and Ma, Zejun and Li, Chunyuan},
  year          = {2024},
  eprint        = {2407.07895},
  archivePrefix = {arXiv},
  primaryClass  = {cs.CV},
  doi           = {10.48550/arXiv.2407.07895},
  url           = {https://arxiv.org/abs/2407.07895}
}

@misc{li2024llava_onevision,
  title         = {LLaVA-OneVision: Easy Visual Task Transfer},
  author        = {Li, Bo and Zhang, Yuanhan and Guo, Dong and Zhang, Renrui and Li, Feng and Zhang, Hao and Zhang, Kaichen and Zhang, Peiyuan and Li, Yanwei and Liu, Ziwei and Li, Chunyuan},
  year          = {2024},
  eprint        = {2408.03326},
  archivePrefix = {arXiv},
  primaryClass  = {cs.CV},
  doi           = {10.48550/arXiv.2408.03326},
  url           = {https://arxiv.org/abs/2408.03326}
}

@misc{bai2025qwen3vl,
  title         = {Qwen3-VL Technical Report},
  author        = {Bai, Shuai and Cai, Yuxuan and Chen, Ruizhe and Chen, Keqin and Chen, Xionghui and others},
  year          = {2025},
  eprint        = {2511.21631},
  archivePrefix = {arXiv},
  primaryClass  = {cs.CV},
  doi           = {10.48550/arXiv.2511.21631},
  url           = {https://arxiv.org/abs/2511.21631}
}

@misc{bai2025qwen25vl,
  title         = {Qwen2.5-VL Technical Report},
  author        = {Bai, Shuai and Chen, Keqin and Liu, Xuejing and Wang, Jialin and Ge, Wenbin and others},
  year          = {2025},
  eprint        = {2502.13923},
  archivePrefix = {arXiv},
  primaryClass  = {cs.CV},
  doi           = {10.48550/arXiv.2502.13923},
  url           = {https://arxiv.org/abs/2502.13923}
}

@inproceedings{Liu2025,
  author       = {Xutong Liu and
                  Xiangxiang Dai and
                  Jinhang Zuo and
                  Siwei Wang and
                  Carlee Joe{-}Wong and
                  John C. S. Lui and
                  Wei Chen},
  title        = {{Offline Learning for Combinatorial Multi-armed Bandits}},
  booktitle    = {Forty-second International Conference on Machine Learning (ICML 2025)},
  publisher    = {PMLR},
  year         = {2025},
}

@ARTICLE{Li2023,
  author={Li, Weiting and Xiang, Liyao and Guo, Bin and Li, Zhetao and Wang, Xinbing},
  journal={IEEE Transactions on Information Forensics and Security}, 
  title={{DPlanner: A Privacy Budgeting System for Utility}}, 
  year={2023},
  volume={18},
  number={},
  pages={1196-1210}}

@ARTICLE{Zhang2026,
  author={Zhang, Xianzhi and Zhou, Yipeng and Wu, Di and Sheng, Quan Z. and Hu, Miao and Xiao, Linchang},
  journal={IEEE Transactions on Networking}, 
  title={{PPVF: An Efficient Privacy-Preserving Online Video Fetching Framework With Correlated Differential Privacy}}, 
  year={2026},
  volume={34},
  number={},
  pages={973-988}}

@inproceedings{DBLP:conf/aistats/HanZWXZ23,
  author       = {Yuxuan Han and
                  Jialin Zeng and
                  Yang Wang and
                  Yang Xiang and
                  Jiheng Zhang},
  title        = {{Optimal Contextual Bandits with Knapsacks under Realizability via
                  Regression Oracles}},
  booktitle    = {International Conference on AISTATS},
  pages        = {5011--5035},
  publisher    = {{PMLR}},
  year         = {2023},
  bibsource    = {dblp computer science bibliography, https://dblp.org}
}

@BOOK{DBLP:journals/ftopt/Hazan16,
  author={Hazan, Elad},
  title={Introduction to Online Convex Optimization},
  year={2016},
  volume={},
  number={},
  pages={},
  publisher ={Now Foundations and Trends},
  doi={10.1561/2400000013}}

@inproceedings{ouyang2023dynamic,
  title={{Dynamic Edge-centric Resource Provisioning for Online and Offline Services Co-location}},
  author={Ouyang, Tao and Zhao, Kongyange and Zhang, Xiaoxi and Zhou, Zhi and Chen, Xu},
  booktitle={IEEE Conference on Computer Communications (INFOCOM)},
  pages={1--10},
  year={2023},
  organization={IEEE}
}

@misc{yang2025moaoff,
      title={MoA-Off: Adaptive Heterogeneous Modality-Aware Offloading with Edge-Cloud Collaboration for Efficient Multimodal LLM Inference}, 
      author={Zheming Yang and Qi Guo and Yunqing Hu and Chang Zhao and Chang Zhang and Jian Zhao and Wen Ji},
      year={2025},
      eprint={2509.16995},
      archivePrefix={arXiv},
      primaryClass={cs.DC},
      url={https://arxiv.org/abs/2509.16995}, 
}

@INPROCEEDINGS{Zhou2025,
  author={Zhou, Chengjin and Liu, Song and Yuan, Xinjing and Shi, Jianxin and Zhang, Yuan and Pu, Lingjun},
  booktitle={{2025 IEEE/ACM 33rd International Symposium on Quality of Service (IWQoS)}}, 
  title={Libra: Novel LLM Token Streaming via Region-Based Task Scheduling and Token Bundling}, 
  year={2025},
  volume={},
  number={},
  pages={1-10}}

@misc{li2025CollaborativeSurvey,
      title={Collaborative Inference and Learning between Edge SLMs and Cloud LLMs: A Survey of Algorithms, Execution, and Open Challenges}, 
      author={Senyao Li and Haozhao Wang and Wenchao Xu and Rui Zhang and Song Guo and Jingling Yuan and Xian Zhong and Tianwei Zhang and Ruixuan Li},
      year={2025},
      eprint={2507.16731},
      archivePrefix={arXiv},
      primaryClass={cs.DC},
      url={https://arxiv.org/abs/2507.16731}, 
}

@inproceedings{Xia2024,
author = {Xia, Yu and Kong, Fang and Yu, Tong and Guo, Liya and Rossi, Ryan A. and Kim, Sungchul and Li, Shuai},
title = {{Which LLM to Play? Convergence-Aware Online Model Selection with Time-Increasing Bandits}},
year = {2024},
publisher = {ACM},
booktitle = {Proceedings of the ACM Web Conference 2024 (WWW '24)},
pages = {4059–4070},
numpages = {12}
}

@inproceedings{Agrawal2016,
author = {Agrawal, Shipra and Devanur, Nikhil R.},
title = {{Linear contextual bandits with knapsacks}},
year = {2016},
publisher = {Curran Associates Inc.},
booktitle = {Proceedings of the 30th International Conference on Neural Information Processing Systems (NIPS'16)},
pages = {3458–3467},
}

@article{Shai2012,
  author       = {Shai Shalev{-}Shwartz},
  title        = {{Online Learning and Online Convex Optimization}},
  journal      = {Foundations and Trends in Machine Learning},
  volume       = {4},
  number       = {2},
  pages        = {107--194},
  year         = {2012},
}

@inproceedings{Yuan2025,
  author       = {Liangqi Yuan and
                  Dong{-}Jun Han and
                  Shiqiang Wang and
                  Christopher Brinton},
  title        = {{Local-Cloud Inference Offloading for LLMs in Multi-Modal, Multi-Task,
                  Multi-Dialogue Settings}},
  booktitle    = {Proceedings of the Twenty-sixth International Symposium on Theory,
                  Algorithmic Foundations, and Protocol Design for Mobile Networks and
                  Mobile Computing (MobiHoc'25)},
  pages        = {201--210},
  year         = {2025},
    publisher = {ACM}
}

@article{Li2025,
  author       = {Yang Li and
                  Xianzhi Zhang and
                  Miao Hu and
                  Di Wu and
                  Yipeng Zhou},
  title        = {{{W2CB:} Online Data Acquisition Optimization With Wasserstein Contextual
                  Combinatorial Bandits}},
  journal      = {IEEE Transactions on Services Computing},
  volume       = {18},
  number       = {6},
  pages        = {4349--4363},
  year         = {2025},
}

@article{Zhang2025TSC,
  author       = {Xianzhi Zhang and
                  Yipeng Zhou and
                  Miao Hu and
                  Di Wu and
                  Pengshan Liao and
                  Mohsen Guizani and
                  Quan Z. Sheng},
  title        = {{BGTplanner: Maximizing Training Accuracy for Differentially Private
                  Federated Recommenders via Strategic Privacy Budget Allocation}},
  journal      = {IEEE Transactions on Services Computing},
  volume       = {18},
  number       = {6},
  pages        = {3523--3537},
  year         = {2025},
}

@inproceedings{Fu2024NIPS,
author = {Fu, Yichao and Zhu, Siqi and Su, Runlong and Qiao, Aurick and Stoica, Ion and Zhang, Hao},
title = {{Efficient LLM scheduling by learning to rank}},
year = {2024},
publisher = {Curran Associates Inc.},
booktitle = {Proceedings of the 38th International Conference on Neural Information Processing Systems (NIPS '24)}
}

@misc{liyang2025arXiv,
      title={{LLM Bandit: Cost-Efficient LLM Generation via Preference-Conditioned Dynamic Routing}}, 
      author={Yang Li},
      year={2025},
      eprint={2502.02743},
      archivePrefix={arXiv},
      primaryClass={cs.LG},
      url={https://arxiv.org/abs/2502.02743}, 
}

@misc{yang2024PerLLMArXiv,
      title={{PerLLM: Personalized Inference Scheduling with Edge-Cloud Collaboration for Diverse LLM Services}}, 
      author={Zheming Yang and Yuanhao Yang and Chang Zhao and Qi Guo and Wenkai He and Wen Ji},
      year={2024},
      eprint={2405.14636},
      archivePrefix={arXiv},
      primaryClass={cs.DC},
      url={https://arxiv.org/abs/2405.14636}, 
}

@misc{li2025arXiv,
      title={{Collaborative Inference and Learning between Edge SLMs and Cloud LLMs: A Survey of Algorithms, Execution, and Open Challenges}}, 
      author={Senyao Li and Haozhao Wang and Wenchao Xu and Rui Zhang and Song Guo and Jingling Yuan and Xian Zhong and Tianwei Zhang and Ruixuan Li},
      year={2025},
      eprint={2507.16731},
      archivePrefix={arXiv},
      primaryClass={cs.DC},
      url={https://arxiv.org/abs/2507.16731}, 
}

@InProceedings{Xu_2025_ICCV,
    author    = {Xu, Zhuoyan and Nguyen, Khoi Duc and Mukherjee, Preeti and Bagchi, Saurabh and Chaterji, Somali and Liang, Yingyu and Li, Yin},
    title     = {Learning to Inference Adaptively for Multimodal Large Language Models},
    booktitle = {Proceedings of the IEEE/CVF International Conference on Computer Vision (ICCV)},
    publisher = {IEEE/CVF},
    year      = {2025},
    pages     = {3552-3563}
}

@inproceedings{Wu2015,
author = {Wu, Huasen and Srikant, R. and Liu, Xin and Jiang, Chong},
title = {{Algorithms with logarithmic or sublinear regret for constrained contextual bandits}},
year = {2015},
publisher = {ACM},
booktitle = {Proceedings of International Conference on NeurIPS},
pages = {433–441},
numpages = {9},
}

@misc{cheng2025simplevqa,
      title={SimpleVQA: Multimodal Factuality Evaluation for Multimodal Large Language Models}, 
      author={Xianfu Cheng and Wei Zhang and Shiwei Zhang and Jian Yang and Xiangyuan Guan and Xianjie Wu and Xiang Li and Ge Zhang and Jiaheng Liu and Yuying Mai and Yutao Zeng and Zhoufutu Wen and Ke Jin and Baorui Wang and Weixiao Zhou and Yunhong Lu and Tongliang Li and Wenhao Huang and Zhoujun Li},
      year={2025},
      eprint={2502.13059},
      archivePrefix={arXiv},
      primaryClass={cs.CL},
      url={https://arxiv.org/abs/2502.13059}, 
}

@misc{cobbe2021gsm8k,
      title={Training Verifiers to Solve Math Word Problems}, 
      author={Karl Cobbe and Vineet Kosaraju and Mohammad Bavarian and Mark Chen and Heewoo Jun and Lukasz Kaiser and Matthias Plappert and Jerry Tworek and Jacob Hilton and Reiichiro Nakano and Christopher Hesse and John Schulman},
      year={2021},
      eprint={2110.14168},
      archivePrefix={arXiv},
      primaryClass={cs.LG},
      url={https://arxiv.org/abs/2110.14168}, 
}

@misc{kembhavi2016diagram,
      title={A Diagram Is Worth A Dozen Images}, 
      author={Aniruddha Kembhavi and Mike Salvato and Eric Kolve and Minjoon Seo and Hannaneh Hajishirzi and Ali Farhadi},
      year={2016},
      eprint={1603.07396},
      archivePrefix={arXiv},
      primaryClass={cs.CV},
      url={https://arxiv.org/abs/1603.07396}, 
}

@article{reddy-etal-2019-coqa,
    title = {{{C}o{QA}: A Conversational Question Answering Challenge}},
    author = "Reddy, Siva  and
      Chen, Danqi  and
      Manning, Christopher D.",
    journal = "Transactions of the Association for Computational Linguistics",
    volume = "7",
    year = "2019",
    publisher = "MIT Press",
    pages = "249--266",
}

@misc{mathew2021infographicvqa,
      title={InfographicVQA}, 
      author={Minesh Mathew and Viraj Bagal and Rubèn Pérez Tito and Dimosthenis Karatzas and Ernest Valveny and C. V Jawahar},
      year={2021},
      eprint={2104.12756},
      archivePrefix={arXiv},
      primaryClass={cs.CV},
      url={https://arxiv.org/abs/2104.12756}, 
}

@inproceedings{lin2004rouge,
  title={Rouge: A package for automatic evaluation of summaries},
  author={Lin, Chin-Yew},
  booktitle={Text summarization branches out},
  pages={74--81},
  year={2004}
}
